\newcommand*{\sg}{\operatorname{sg}}
\newcommand*{\tr}{\operatorname{Tr}}
\newcommand{\norm}[1]{\left\lVert#1\right\rVert}
\newcommand*\Ebbis[2][]{\mathbb{E}_{#1}[ #2]}
\newcommand*\Eb[2][]{\mathbb{E}_{#1}\left[ #2 \right]}
\newcommand*\Covb[2]{\mathrm{Cov}\left[ #1, #2 \right]}
\renewcommand*{\Re}{\operatorname{Re}}
\renewcommand*{\Im}{\operatorname{Im}}
\newcommand*{\conjbbis}[1]{\bar{#1}}
\newcommand*{\conjb}[1]{\overline{#1}}
\newtheorem{proposition}{Proposition}[section]
\newtheorem{lemma}{Lemma}[section]
\newenvironment{manualprop}[1]{%
  \manualpropinner
}{\endmanualpropinner}
\title{Statistical Component Separation for \\ Targeted Signal Recovery in Noisy Mixtures}
\author{\name Bruno Régaldo-Saint Blancard \email bregaldo@flatironinstitute.org \\
      \addr Center for Computational Mathematics\\
      Flatiron Institute \\
      New York, NY 10010\\
      \AND
      \name Michael Eickenberg \email meickenberg@flatironinstitute.org \\
      \addr Center for Computational Mathematics\\
      Flatiron Institute \\
      New York, NY 10010\\}
\begin{document}

\maketitle

\begin{abstract}

Separating signals from an additive mixture may be an unnecessarily hard problem when one is only interested in specific properties of a given signal. In this work, we tackle simpler ``statistical component separation'' problems that focus on recovering a predefined set of statistical descriptors of a target signal from a noisy mixture. Assuming access to samples of the noise process, we investigate a method devised to match the statistics of the solution candidate corrupted by noise samples with those of the observed mixture. We first analyze the behavior of this method using simple examples with analytically tractable calculations. Then, we apply it in an image denoising context employing 1)~wavelet-based descriptors, 2)~ConvNet-based descriptors on astrophysics and ImageNet data. In the case of 1), we show that our method better recovers the descriptors of the target data than a standard denoising method in most situations. Additionally, despite not constructed for this purpose, it performs surprisingly well in terms of peak signal-to-noise ratio on full signal reconstruction. In comparison, representation 2) appears less suitable for image denoising. Finally, we extend this method by introducing a diffusive stepwise algorithm which gives a new perspective to the initial method and leads to promising results for image denoising under specific circumstances.

\end{abstract}
\section{Introduction}

 We investigate the properties of a new class of source separation algorithms known as \textit{statistical component separation methods} that has recently emerged for the analysis of astrophysical data~\citep{Regaldo2021, Delouis2022, Siahkoohi2023, Auclair2023}. Contrary to standard source separation algorithms, such as blind source separation techniques~\citep{Cardoso1998}, these methods do not focus on recovering the signal of interest, but on solely recovering certain statistics or features derived from this signal. 
These have proven successful in separating signals of distinct statistical natures in a variety of astrophysical contexts, such as the separation of interstellar dust emission and instrumental noise in data from the \textit{Planck} satellite~\citep{Regaldo2021, Delouis2022}, the separation of interstellar dust emission from the cosmic infrared background~\citep{Auclair2023}, or the removal of glitches in seismic data from the \textit{InSight} Mars mission~\citep{Siahkoohi2023}. The methodology is not specific to astrophysics, and could be of interest to other scientific fields.

\paragraph{Problem.} We observe a noisy mixture $y = x_0 + \epsilon_0$ with $x_0$ the signal of interest and $\epsilon_0$ a noise process. Signals can be viewed as vectors of $\mathbb{R}^M$. We refer to $\epsilon_0$ as the noise, but we make no assumption on its distribution $p(\epsilon_0)$, so that it can include any form of contaminant of the signal~$x_0$. However, we assume that we have a way to sample $p(\epsilon_0)$. Let $\phi$ be a function, called the \textit{representation}, that maps $x$ to a vector of features or summary statistics in $\mathbb{R}^K$, with typically $K \ll M$.
The ultimate goal of a statistical component separation method is to recover $\phi(x_0)$. \cite{Regaldo2021} introduced a first algorithm to do that which consists in constructing $\hat{x}_0$ such that:
\begin{equation}
\label{eq:loss_base}
    \hat{x}_0 \in \underset{x}{\textrm{arg min}}~\mathcal{L}(x)~\text{ with }~\mathcal{L}(x) = \Eb[\epsilon \sim p(\epsilon_0)]{\norm{\phi(x + \epsilon) - \phi(y)}^2_2},
\end{equation}
and where the optimization of $\mathcal{L}$ is initialized with $y$. For suited $\phi$, previous works have demonstrated empirically that $\phi(\hat{x}_0)$ can be a relevant estimate of $\phi(x_0)$, and that, while not expected, $\hat{x}_0$ also seemed to be a reasonable estimate of $x_0$ (see~\cite{Regaldo2021, Delouis2022, Siahkoohi2023, Auclair2023}).\footnote{Note that alternatives losses $\mathcal{L}$ have been considered in \cite{Delouis2022, Siahkoohi2023, Auclair2023}, and have shown significant improvements for the estimation of $\phi(x_0)$, respectively. A formal investigation of these alternatives is left for future work.} 
The goal of this paper is to give first formal elements to explain these results, establish performance baselines, and introduce new methods for solving the problem. For numerical experiments, we will approximate the expected value involved in $\mathcal{L}$ by Monte Carlo estimates. Introducing $Q$ independent noise samples $\epsilon_1, \dots, \epsilon_Q \sim p(\epsilon_0)$, the corresponding empirical loss $\hat{\mathcal{L}}$ reads:
\begin{equation}
    \hat{\mathcal{L}}(x) = \frac1{Q}\sum_{i = 1}^Q\norm{\phi(x + \epsilon_i) - \phi(y)}^2.
\end{equation}

\paragraph{Related work.} Aside from the literature mentioned above, we are not aware of directly related work on this precise problem. However, we mention that adjacent problems of task-adapted reconstructions were explored in learning contexts. In particular, \cite{Mairal2012} investigated task-adapted dictionary learning, for which sparse data representations can be tuned to specific tasks. More recently, \cite{Adler2022} established a framework for task-related solving of inverse problems and showed how deep neural networks can be used for it. Finally, we add that the approach investigated in this paper shares similarities with sparse regularization techniques proposed in the extensive denoising and component separation literature~\citep[e.g.,][]{Starck2005, Selesnick2012, Elad2023}. The loss $\mathcal{L}$ can indeed be interpreted as a regularization term in a denoising or component separation context. However, we emphasize that the goals differ, since statistical component separation methods primarily focus on recovering $\phi(x_0)$ (and not on recovering $x_0$).

\paragraph{Outline.} In Sect.~\ref{sec:exploration}, we compute the analytical expressions of the global minimizers of $\mathcal{L}$ for different examples of representations $\phi$ and in the case of Gaussian noise $\epsilon_0$. This will give us a sense of the constraints that $\phi$ must respect for $\phi(\hat{x}_0)$ to be a relevant estimate of  $\phi(x_0)$. In Sect.~\ref{sec:exp}, we describe a first algorithm to perform numerical experiments in typical image denoising settings for two different representations: the first one based on wavelet phase harmonic statistical descriptors, and the second one based on ConvNet feature maps. Then, in Sect.~\ref{sec:beyond}, we discuss strategies to improve the results in the case of Gaussian noise using a new diffusive stepwise algorithm. We finally summarize our conclusions and perspectives in Sect.~\ref{sec:conclusion}.
Codes and data are provided on \href{https://github.com/bregaldo/stat_comp_sep}{GitHub}.\footnote{ \url{https://github.com/bregaldo/stat_comp_sep}.}

\paragraph{Notations.} We refer to the components of a vector $x \in \mathbb{C}^M$ as $x_i$ or $x[i]$. The dot product between $x$ and $y$ is $x\cdot y = \sum_{i=1}^M x_i\conjb{y_i}$, and the corresponding  norm of $x$ is $\norm{x} = (\sum_{i=1}^M|x_i|^2)^{1/2}$. The convolution of $x, y \in \mathbb{C}^M$ is $x \star y$. We denote the matrix-vector product between $A$ and $x$ by $Ax$, or $A\cdot x$ when there is ambiguity. We call $\textrm{sp(A)}$ the set of eigenvalues of a matrix $A$. For $A$ and $B$ two matrices of same size, the Frobenius inner product of $A$ and $B$ is $\langle A, B\rangle_{\rm F} = \tr(A^\dagger B)=\sum_{i, j}\conjb{A_{ij}}B_{ij}$, where $A^\dagger = \conjbbis{A}^T$, and the corresponding norm of $A$ is $\norm{A}_{\rm F} = \sum_{i,j}|A_{ij}|^2$. For $p_1, p_2$ two independent random processes, we write $p_1 \sim p_2$ when $p_1$ and $p_2$ follow the same distribution.

\section{A First Analytical Exploration}
\label{sec:exploration}

As a first exploration, we compute the set of global minimizers of $\mathcal{L}$ as defined in Eq.~\eqref{eq:loss_base} for simple examples of $\phi$, and choosing for $\epsilon_0$ a Gaussian white noise distribution of variance $\sigma^2$, that is $p(\epsilon_0) \sim \mathcal{N}(0, \sigma^2I_M)$. This assumption for $\epsilon_0$ places us in a typical denoising framework. We also provide in App.~\ref{app:mle} a discussion of how our method relates to maximum likelihood estimation in this context.

\subsection{Linear Representation \texorpdfstring{$\phi(x) = Ax$}{phi(x) = Ax} leads to mean subtraction}

To start with, we consider the case where $\phi(x) = Ax$, with $A$ an injective matrix of size $K\times M$ (with necessarily $K \geq M$). The vector $\phi(x)$ then simply consists in a set of features that are linear combinations of the input vector components.  The following proposition establishes that the minimizer is necessarily $y - \Eb{\epsilon_0}$.

\begin{proposition}
 For $\phi(x) = Ax$ with $A$ injective, $\mathcal{L}$ has a unique global minimizer equal to $y - \Eb{\epsilon_0}$.
\end{proposition}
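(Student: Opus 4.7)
The plan is to expand the squared norm inside the expectation, isolate the $x$-dependence, and reduce the problem to showing that $\|A(x-y)\|_2^2$ has a unique minimizer.

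First I would write $A(x+\epsilon) - Ay = A(x-y) + A\epsilon$ and expand the square, exploiting linearity of expectation:
\begin{equation*}
\mathcal{L}(x) = \|A(x-y)\|_2^2 + 2\, \Re\bigl\langle A(x-y),\, A\,\Eb[\epsilon]{\epsilon}\bigr\rangle + \Eb[\epsilon]{\|A\epsilon\|_2^2}.
\end{equation*}
The key observation is that the cross term vanishes because $\Eb{\epsilon_0} = 0$, and the last term is a constant that does not depend on $x$. Hence minimizing $\mathcal{L}$ is equivalent to minimizing $\|A(x-y)\|_2^2$.

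Since this quantity is non-negative and equals $0$ at $x=y$, the minimum value is attained at $x=y$. Uniqueness follows from the injectivity of $A$: if $\|A(x-y)\|_2^2 = 0$ then $A(x-y) = 0$, and $\ker A = \{0\}$ forces $x = y$.

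The main (and essentially only) obstacle is the cross term, which needs the assumption $\Eb{\epsilon_0} = 0$; without this one would obtain a shifted optimum $y - A^+ A\, \Eb{\epsilon_0}$ on the affine subspace defined by injectivity. Everything else is a routine expansion, so I would keep the write-up to a few lines.
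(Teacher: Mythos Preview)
Your proof is correct and slightly more direct than the paper's. The paper argues by computing the gradient $\nabla\mathcal{L}(x)=2A^TA(x-y)$ (after first verifying $\mathcal{L}(x)\to+\infty$ as $\|x\|\to\infty$ to guarantee existence of a minimum), and then uses invertibility of $A^TA$ to conclude $x=y$ is the unique critical point. You instead expand $\mathcal{L}$ explicitly as $\|A(x-y)\|_2^2+\text{const}$, which bypasses both the growth-at-infinity check and the gradient computation. Your route is more elementary and self-contained for this particular $\phi$; the paper's gradient approach has the advantage of fitting the same template used for the harder quadratic cases in the subsequent propositions.
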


This proposition is proven in App.~\ref{app:phi_linear_proof}. It is a simple and informative result: if our representation is linear, then the minimization of $\mathcal{L}$ can only bring us back to the observation $y$ diminished by the mean of the noise. In particular, when the mean of the noise is zero, this optimization has no effect, prompting us to turn to a nonlinear operator $\phi$.

\subsection{Quadratic Representation \texorpdfstring{$\phi(x)=x^2$}{phi(x) = x2} leads to \texttt{sqrt}-thresholding}
\label{sec:quadratic_rep}

A very simple nonlinear representation is the pointwise quadratic function. Without loss of generality, we only consider the case where the dimension of $x$ is $M=1$. The solution are given by the following proposition.

\begin{wrapfigure}{r}{0.35\textwidth}
  \vspace{-1.2cm}
  \centering
  \includegraphics[width=1.0\linewidth]{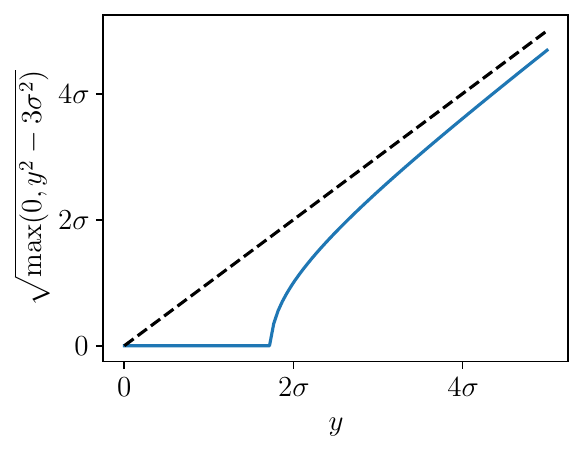}
  \caption{\texttt{sqrt}-thresholding function involved in Sect.~\ref{sec:quadratic_rep} (in blue), and asymptotic behavior (in black).}
  \label{fig:quadratic_rep}
\end{wrapfigure}

\begin{proposition}
\label{prop:quad}
 For $\phi(x) = x^2$ and $p(\epsilon_0) \sim \mathcal{N}(0, \sigma^2I_M)$, the global minimizers of $\mathcal{L}$ are 0 when $y^2 \leq 3\sigma^2$, and $\pm\sqrt{y^2 - 3\sigma^2}$ when $y^2 > 3\sigma^2$.
\end{proposition}

We refer the reader to App.~\ref{app:phi_quadratic_proof} for a proof. This solution introduces a threshold on $y^2$. If $y^2$ is not sufficiently large compared to the variance of the noise $\sigma^2$, then the solution is 0.
Otherwise, we obtain $\pm\sqrt{y^2 - 3\sigma^2}$. This second case demonstrates an attempt to ``subtract the noise magnitude from the signal magnitude''.

The behavior, shown in Fig. \ref{fig:quadratic_rep}, is similar to that of the piecewise linear soft-thresholding operator. One significant difference is that it asymptotically reaches the identity function.

The intuition here is that, whenever the amplitude of the signal is too small compared to that of the noise, it is optimal to shrink it to zero. However, if the signal clearly stands out from the noise, then $y$ can be corrected accordingly through a small shrinkage.

{Note that related shrinkage functions are common in the context of sparse signal processing~\citep[see e.g.][]{Selesnick2012, AlShabili2021}.}

\subsection{Power Spectrum Representation}

{A standard summary statistic for stationary signals is the power spectrum - its distribution of power over frequency components.} We consider signals of arbitrary dimension $M$, and a set of filters $\psi_1, \dots, \psi_K$ typically well localized in Fourier space (i.e. ``bandpass filters''). We assume that these filters cover Fourier space, meaning that for any Fourier mode $k$, there exists a $i_0$ such that $\hat{\psi}_{i_0}(k) \neq 0$.\footnote{$\hat{\psi}$ here refers to the discrete Fourier transform of $\psi$.} We define the power spectrum representation as:
\begin{equation}
    \phi(x) = (\norm{\psi_1\star x}^2, \dots, \norm{\psi_K\star x}^2).
\end{equation}
It corresponds to a vector of $K$ coefficients measuring the power of the input signal on the passband of each of the filters $\psi_1, \dots, \psi_K$.

If we assume that the frequency supports of these filters are disjoint, minimizing $\mathcal{L}$ is equivalent to minimizing independently for all $i = 1, \dots, K$:
\begin{equation}
    \mathcal{L}_i(x_i) = \Eb{\left(\norm{\Psi_i(x_i+\epsilon)}^2 - \norm{\Psi_i y}^2\right)^2},
\end{equation}
where $x_i$ is the orthogonal projection of the input $x \in \mathbb{R}^M$ in the subspace spanned by the Fourier modes of the passband of $\psi_i$, and $\Psi_i$ is the injective matrix representing the linear operation $x \rightarrow \psi_i \star x$ in that subspace. The following proposition (proof given in App.~\ref{app:phi_linear_quadratic_proof}), determines the minimizers of $\mathcal{L}_i$.

\begin{proposition}
\label{prop:power_spectrum}
For $\phi(x) = \norm{Ax}^2$ with $A$ injective and ${p(\epsilon_0) \sim \mathcal{N}(0, \sigma^2I_M)}$, introducing:
\begin{equation}
    \Lambda = \{\lambda \in \mathrm{sp}(A^TA)~\text{such that}~ \norm{A y}^2 - \Eb{\norm{A \epsilon}^2} - 2\sigma^2\lambda \geq 0\},
\end{equation}
if $\Lambda = \emptyset$, then the global minimizer of $\mathcal{L}$ is  unique equal to 0, otherwise, the minimizers are the eigenvectors $x$ of $A^TA$ associated with $\min \Lambda$ such that ${\norm{A x}^2 = \norm{A y}^2 - \Eb{\norm{A \epsilon}^2} - 2\sigma^2\min \Lambda}$.\footnote{We can verify that for $A$ a matrix of size $1\times 1$, we recover a result equivalent to that of Prop.~\ref{prop:quad}.}
\end{proposition}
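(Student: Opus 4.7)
The plan is to reduce the problem to a finite-dimensional optimization over the non-negative orthant by exploiting the spectral structure of $A^TA$, and then apply KKT analysis.

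First I would obtain a closed-form expression for $\mathcal{L}(x)$ by expanding
\[
\|A(x+\epsilon)\|^2 - \|Ay\|^2 = \bigl(\|Ax\|^2 - \|Ay\|^2 + \|A\epsilon\|^2\bigr) + 2(Ax)^T A\epsilon,
\]
squaring, and using the Gaussianity of $\epsilon$. The cross term $\mathbb{E}[(\text{constant} + \|A\epsilon\|^2)\cdot 2(Ax)^T A\epsilon]$ vanishes because it is a polynomial in $\epsilon$ of odd total degree. The square of the linear-in-$\epsilon$ part gives $4\sigma^2\|A^TAx\|^2$. For the remaining term, using $\mathbb{E}[\|A\epsilon\|^2] = \sigma^2\tr(A^TA)$ and $\mathrm{Var}(\|A\epsilon\|^2) = 2\sigma^4\tr((A^TA)^2)$, a short calculation yields
\[
\mathcal{L}(x) = \bigl(\|Ax\|^2 - \alpha\bigr)^2 + 4\sigma^2\|A^TAx\|^2 + C,
\]
where $\alpha := \|Ay\|^2 - \mathbb{E}[\|A\epsilon\|^2]$ and $C$ is independent of $x$.

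Next I would diagonalize $A^TA = U\Lambda U^T$ (well-defined with positive eigenvalues $\lambda_1,\dots,\lambda_M$ since $A$ is injective), set $z = U^Tx$, and change variables to $t_i := z_i^2 \ge 0$. This gives
\[
\mathcal{L}(x) - C \;=\; G(t) \;:=\; \Bigl(\textstyle\sum_i \lambda_i t_i - \alpha\Bigr)^{\!2} + 4\sigma^2 \sum_i \lambda_i^2 t_i,
\]
so the problem is reduced to minimizing $G$ over $t \in \mathbb{R}_{\ge 0}^M$. The map $x \mapsto t$ is surjective onto this orthant, and $G$ is continuous and coercive, so a minimizer exists.

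I would then apply the KKT conditions: $\partial G/\partial t_j = 2\lambda_j\bigl(\sum_i \lambda_i t_i - \alpha + 2\sigma^2\lambda_j\bigr)$ must be $\ge 0$, with equality whenever $t_j > 0$. If two indices $j_1, j_2$ in the support have distinct $\lambda_{j_1} \ne \lambda_{j_2}$, the two equality conditions contradict each other, so the support lies in a single eigenspace with some common eigenvalue $\lambda^\star$, and the KKT equality reads $\sum_i \lambda_i t_i = \alpha - 2\sigma^2\lambda^\star$. Since the left-hand side is non-negative, this forces $\lambda^\star \in \Lambda$. The complementary condition $\partial G/\partial t_j \ge 0$ for $j$ outside the support then simplifies to $\lambda_j \ge \lambda^\star$, i.e.\ $\lambda^\star$ is the minimum eigenvalue appearing in the support.

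Finally I would compare the KKT values. Plugging back yields $G(t) = \alpha^2 - (\alpha - 2\sigma^2\lambda^\star)^2$ at a critical point with support $\lambda^\star \in \Lambda$, and $G(0) = \alpha^2$ at the trivial critical point. If $\Lambda = \emptyset$, only $t=0$ remains, giving the unique minimum $x = 0$. Otherwise, the critical value is minimized by maximizing $\alpha - 2\sigma^2\lambda^\star \ge 0$, which selects $\lambda^\star = \min \Lambda$; the minimizers are exactly the $x$ lying in the eigenspace of $A^TA$ for $\min\Lambda$ with $\|Ax\|^2 = \alpha - 2\sigma^2\min\Lambda$, as claimed. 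The main obstacle is the KKT case analysis in paragraph three, particularly the argument that the support must be confined to a single eigenspace; the Gaussian-moment bookkeeping and the final value comparison are then routine.
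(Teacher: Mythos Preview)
Your proof is correct and follows essentially the same strategy as the paper: derive the closed form $\mathcal{L}(x)=(\|Ax\|^2-\alpha)^2+4\sigma^2\|A^TAx\|^2+C$, show that critical points are eigenvectors of $A^TA$, and then compare the resulting loss values. The only difference is in how the eigenvector conclusion is reached: the paper sets $\nabla\mathcal{L}(x)=0$ directly and reads off $(\alpha(x)I_M+A^TA)x=0$, whereas you diagonalize $A^TA$, pass to $t_i=z_i^2\ge 0$, and use KKT on the orthant to force the support into a single eigenspace. One minor wording slip: from $\lambda_j\ge\lambda^\star$ for all $j$ outside the support you should conclude that $\lambda^\star$ is the minimum eigenvalue of $A^TA$ (hence $\min\Lambda$ when $\Lambda\neq\emptyset$), not merely ``the minimum eigenvalue appearing in the support''; your final value comparison reaches the right conclusion regardless.
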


Let us take a step back before breaking down this result. The power spectrum is a statistic that is additive when computed on independent signals, so that for $a$ and $b$ two independent processes, we have $\Eb{\phi(a+b)} = \Eb{\phi(a)} + \Eb{\phi(b)}$. In our setting, where $x_0$ is viewed as a deterministic quantity, this gives:
\begin{equation}
    \Eb[\epsilon_0\sim p(\epsilon_0)]{\phi(y)} = \phi(x_0) + \Eb[\epsilon_0\sim p(\epsilon_0)]{\phi(\epsilon_0)}.
\end{equation}
Therefore, an unbiased estimator of the power spectrum statistics of $x_0$ based on the observation $y$ is:
\begin{equation}
    \widehat{\phi(x_0)} = \phi(y) - \Eb[\epsilon_0\sim p(\epsilon_0)]{\phi(\epsilon_0)}.
\end{equation}

Now, applied to $\phi_i(x_i) = \norm{\Psi_i x_i}^2$, Prop.~\ref{prop:power_spectrum} tells us that there is a threshold below which the minimization of $\mathcal{L}_i$ leads to a signal that is zero over the passband of $\psi_i$, and above which this minimization leads to a signal $x_i$ such that ${\norm{\Psi_i x_i}^2 = \norm{\Psi_i y}^2 - \Ebbis{\norm{\Psi_i \epsilon}^2} - 2\sigma^2\min \Lambda_i = \widehat{\phi_i(x_0)} - 2\sigma^2\min \Lambda_i}$. For typical filters, we usually have $2\sigma^2\min \Lambda_i \ll \Ebbis{\norm{\Psi_i \epsilon}^2}$, so that when the signal stands out from the noise, the global minimizers of $\mathcal{L}_i$ have power spectra statistics that almost coincide with the unbiased estimator of the power spectrum coefficients of $x_0$. In conclusion, in this setting, the power spectrum representation leads to minimizers $\hat{x}_0$ such that $\phi(\hat{x}_0)$ is an explicit estimate of $\phi(x_0)$.

We note that for filters with intersecting passbands, the previous analysis becomes significantly more technical. We prove in App.~\ref{app:phi_linear_quadratic_gen_proof} a generalization of Prop.~\ref{prop:quad} that addresses this case.

\section{Statistical Component Separation and Image Denoising}
\label{sec:exp}

Previous works have shown that statistical component separation methods can perform surprisingly well for image denoising provided that the representation $\phi$ is suited to the data~\citep{Regaldo2021, Delouis2022, Auclair2023}. Although the goal of these methods remains to estimate $\phi(x_0)$, in this section, we investigate numerically to which extent $\hat{x}_0$ can also be a relevant estimate of $x_0$.

We focus on a typical denoising setting, where $\epsilon_0$ is a colored Gaussian stationary noise. We employ the block-matching and 3D filtering (BM3D) algorithm~\citep{Dabov2007, Makinen2020} as a benchmark. However, we emphasize that contrary to BM3D, statistical component separation methods can apply similarly to arbitrary noise processes, including non-Gaussian or non-stationary ones. This has already been illustrated in the previous literature on this subject, and we will consider an additional exotic noise in this section for this purpose.

We introduce in Sect.~\ref{sec:vanilla_algo} the vanilla algorithm used for the experiments of this section. Then, we apply this algorithm for two distinct representations: Sect.~\ref{sec:wph_exp} employs a representation based on the wavelet phase harmonics (WPH) statistics, and Sect.~\ref{sec:vgg_exp} uses summary statistics defined from feature maps of a ConvNet.

\subsection{Vanilla Algorithm}
\label{sec:vanilla_algo}

\begin{algorithm}
\caption{Vanilla Statistical Component Separation}\label{alg:vanilla_scs}
\begin{algorithmic}
\State \textbf{Inputs:} $y$, $p(\epsilon_0)$, $Q$, $T$, gradient-based optimizer (e.g. LBFGS)
\State \textbf{Initialize:} $\hat{x}_0 = y$
\For{$i = 1 \dots T$}
    \State \textbf{sample} $\epsilon_1, \dots, \epsilon_Q \sim p(\epsilon_0)$
    \State $\hat{\mathcal{L}}(\hat{x}_0) = \sum_{k=1}^Q \norm{\phi(\hat{x}_0 + \epsilon_k) - \phi(y)}^2 / Q$
    \State $\hat{x}_0 \gets \textsc{one\_step\_optim}\left[\hat{x}_0, \nabla\hat{\mathcal{L}}(\hat{x}_0)\right]$
\EndFor
\State \Return $\hat{x}_0$
\end{algorithmic}
\end{algorithm}

Analytically determining the global minimizer of $\mathcal{L}$ for arbitrary representations $\phi$ quickly becomes intractable, in which case one has to solve this optimization problem numerically. A straightforward way to do that is to approximate $\mathcal{L}$ via Monte Carlo estimates. Introducing $Q$ independent noise samples $\epsilon_1, \dots, \epsilon_Q \sim p(\epsilon_0)$, we define:
\begin{equation}
\label{eq:loss_monte_carlo}
    \hat{\mathcal{L}}(x) = \frac1{Q}\sum_{i = 1}^Q\norm{\phi(x + \epsilon_i) - \phi(y)}^2.
\end{equation}
The minimization of $\mathcal{L}$ then takes the form of a regular stochastic optimization described in Algorithm~\ref{alg:vanilla_scs}, and referred to as the \textit{vanilla} algorithm in the following. This algorithm was used in \cite{Regaldo2021}, where the authors had employed a L-BFGS optimizer ~\citep{Byrd1995, Zhu1997} using $y$ as the initial guess. We proceed similarly in the following, and fix the number of iterations to $T = 30$ and the batch size to $Q = 100$.\footnote{We have found empirically that the batch size should be kept sufficiently high for the L-BFGS optimizer to behave correctly. We also report that $T = 30$ was sufficient to achieve approximate convergence for all experiments.}

\subsection{Wavelet Phase Harmonics Representation}
\label{sec:wph_exp}

\paragraph{Definition.} Similarly to \cite{Regaldo2021, Auclair2023}, we consider a representation based on wavelet phase harmonics (WPH) statistics~\citep{Mallat2020, Zhang2021, Allys2020}. These statistics efficiently capture coherent structures in a variety of non-Gaussian stationary data. They rely on the wavelet transform, which locally decomposes the signal onto oriented scales. Formally, for a random image $x$, the WPH statistics are estimates of covariances between pointwise nonlinear transformations of the wavelet transform of $x$. Using a set of complex-valued wavelets $\psi_1, \dots, \psi_N$ covering Fourier space with their respective passbands, we focus on covariances of the form:
\begin{align}
    S^{11}_{i}(x) &= \Covb{x \star \psi_i}{x\star \psi_i},
    &S^{00}_{i}(x) &= \Covb{|x \star \psi_i|}{|x\star \psi_i|}, \\
    S^{01}_{i}(x) &= \Covb{|x \star \psi_i|}{x\star \psi_i},
    &C^{01}_{i, j}(x) &= \Covb{|x \star \psi_i|}{x\star \psi_j}.
\end{align}
We give in App.~\ref{app:wph} the technical details related to the definition and computation of these statistics. In practice, $\phi(x)$ is made of $K=420$ complex-valued coefficients for $x$ a $M=256\times 256$ image.

\paragraph{Experimental Setting.} We consider three different types of $256\times 256$ images corresponding to a simulation of the emission of dust grains in the interstellar medium (the \textit{dust} image), a simulation of the large-scale structure of the Universe~\citep{Villaescusa2020} (the \textit{LSS} image), and randomly selected images from the ImageNet dataset~\citep{Deng2009} (the \textit{ImageNet} images). We give additional details on this data in App.~\ref{app:data_details}. We then consider four different noise processes: three colored Gaussian noises, namely pink, white, and blue noises, as well as a non-Gaussian noise made of small crosses (see Fig.~\ref{fig:wph_dust_images}, bottom right). We vary, for the colored noises, the amplitude $\sigma$ of the noise considering 10 different levels ranging from 0.1 to 2.14 (logarithmically spaced) in unit of the standard deviation of $x$, and for the ``crosses'' noises, the density of crosses $\rho$ considering 10 different values ranging from 0.001 to 0.063 (logarithmically spaced)\footnote{The density $\rho$ is defined as the expected ratio of crosses to the total number of pixels.}. For each of theses cases, we apply Algorithm~\ref{alg:vanilla_scs} for 30 different noise realizations. Each optimization takes $\sim 40$~s with a GPU-accelerated code on a A100 GPU.

\begin{figure}
    \centering
    \includegraphics[width=1.0\textwidth]{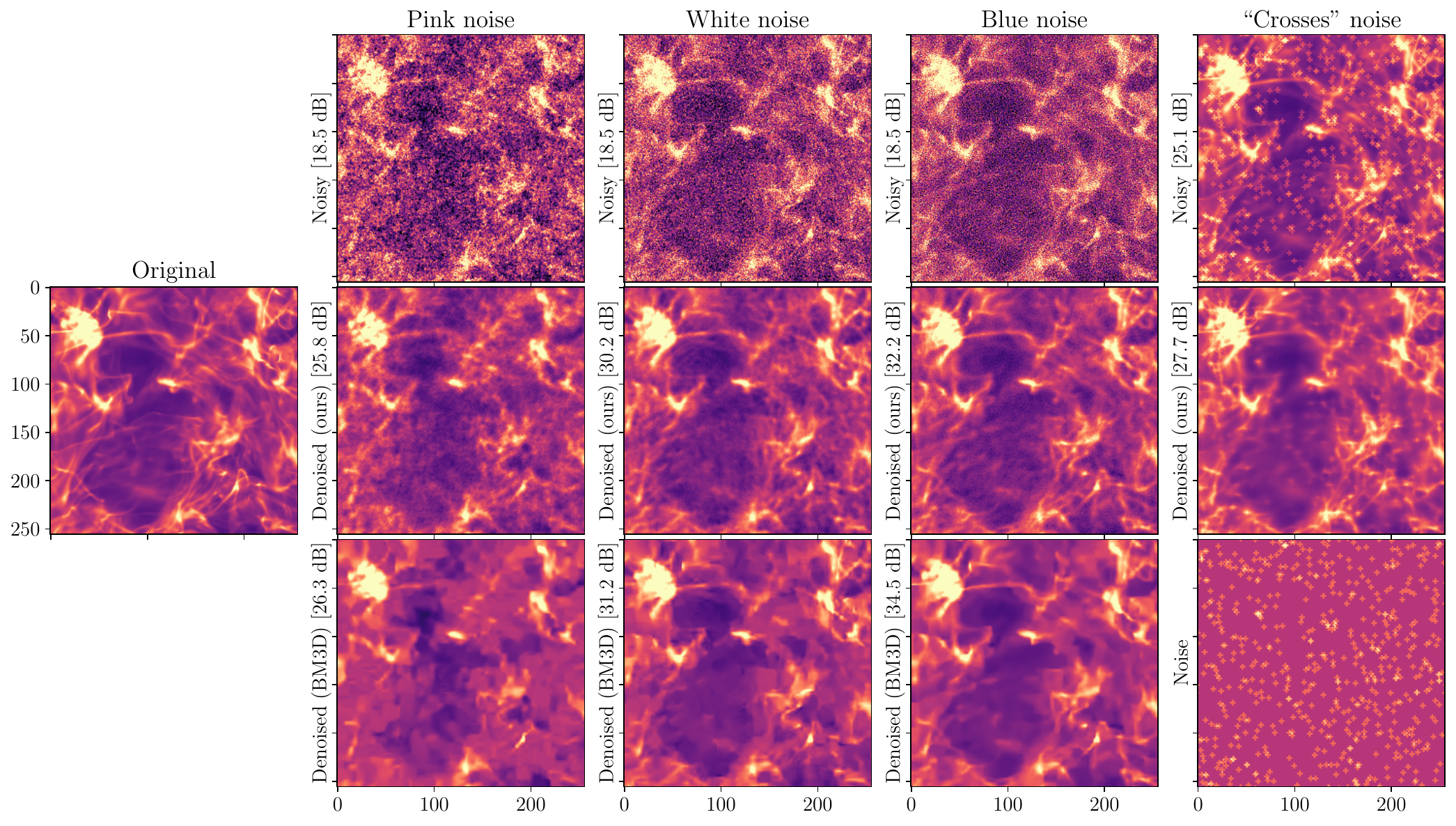}
    \caption{Original dust image $x_0$ (center left), noisy realizations $y$ for distinct colored Gaussian noise processes and a non-Gaussian noise (top row), denoised images $\hat{x}_0$ using Algorithm~\ref{alg:vanilla_scs} with the WPH representation from Sect.~\ref{sec:wph_exp} (middle row), and denoised images using BM3D for colored noises (bottom row, except bottom right). Additionally, a sample of the non-Gaussian ``crosses'' noise is shown (bottom right). The accompanying PSNR values are provided next to each noisy and denoised image.}
    \label{fig:wph_dust_images}
\end{figure}

\begin{figure}
    \centering
    \includegraphics[width=1.0\textwidth]{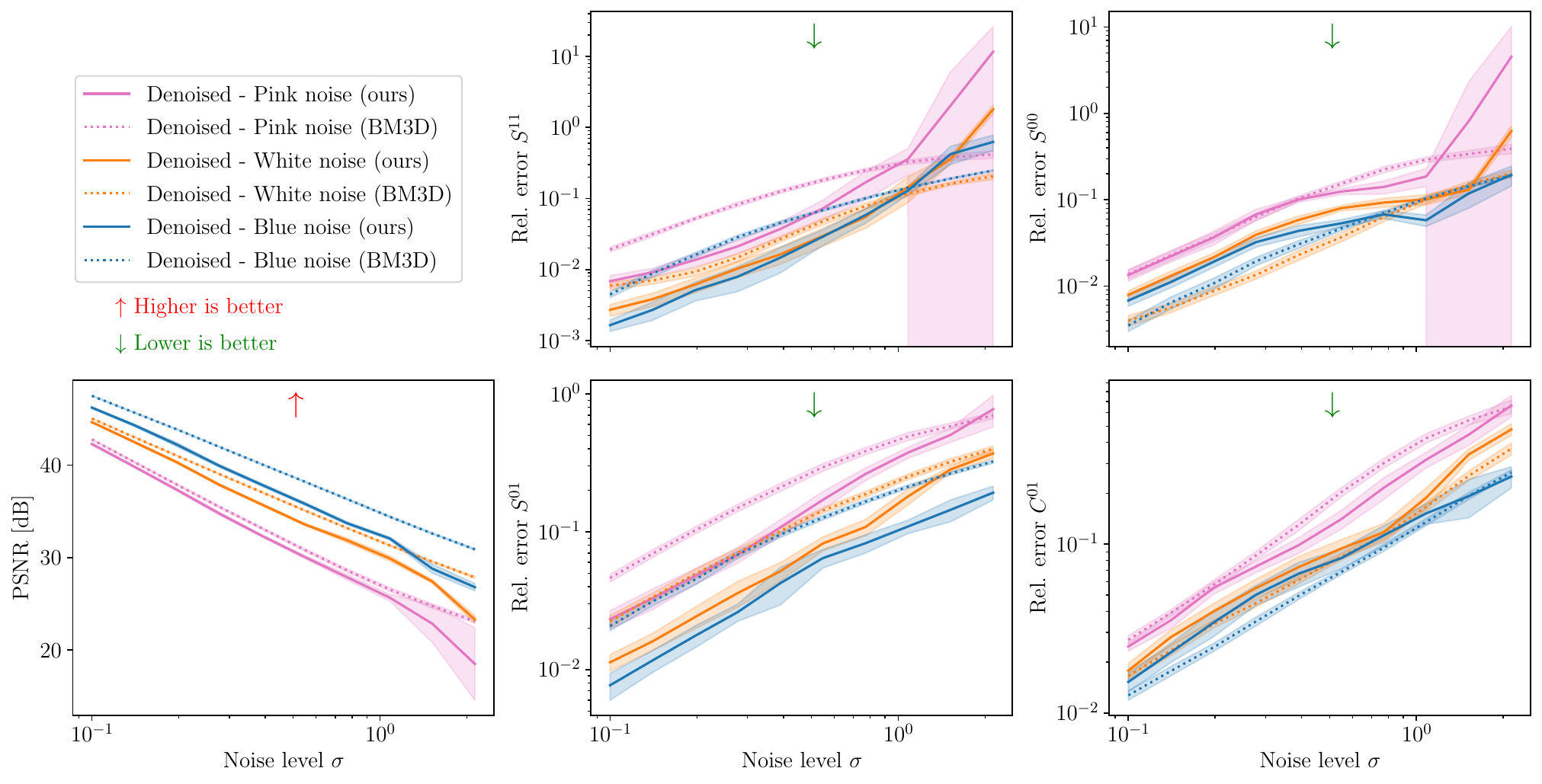}
    \caption{PSNR and relative errors of the WPH statistics per class of coefficients as a function of the noise level $\sigma$ for the denoised dust images as described in Sect.~\ref{sec:wph_exp} for each type of colored noise.}
    \label{fig:wph_dust_stats}
\end{figure}

\begin{figure}
    \centering
    \includegraphics[width=1.0\textwidth]{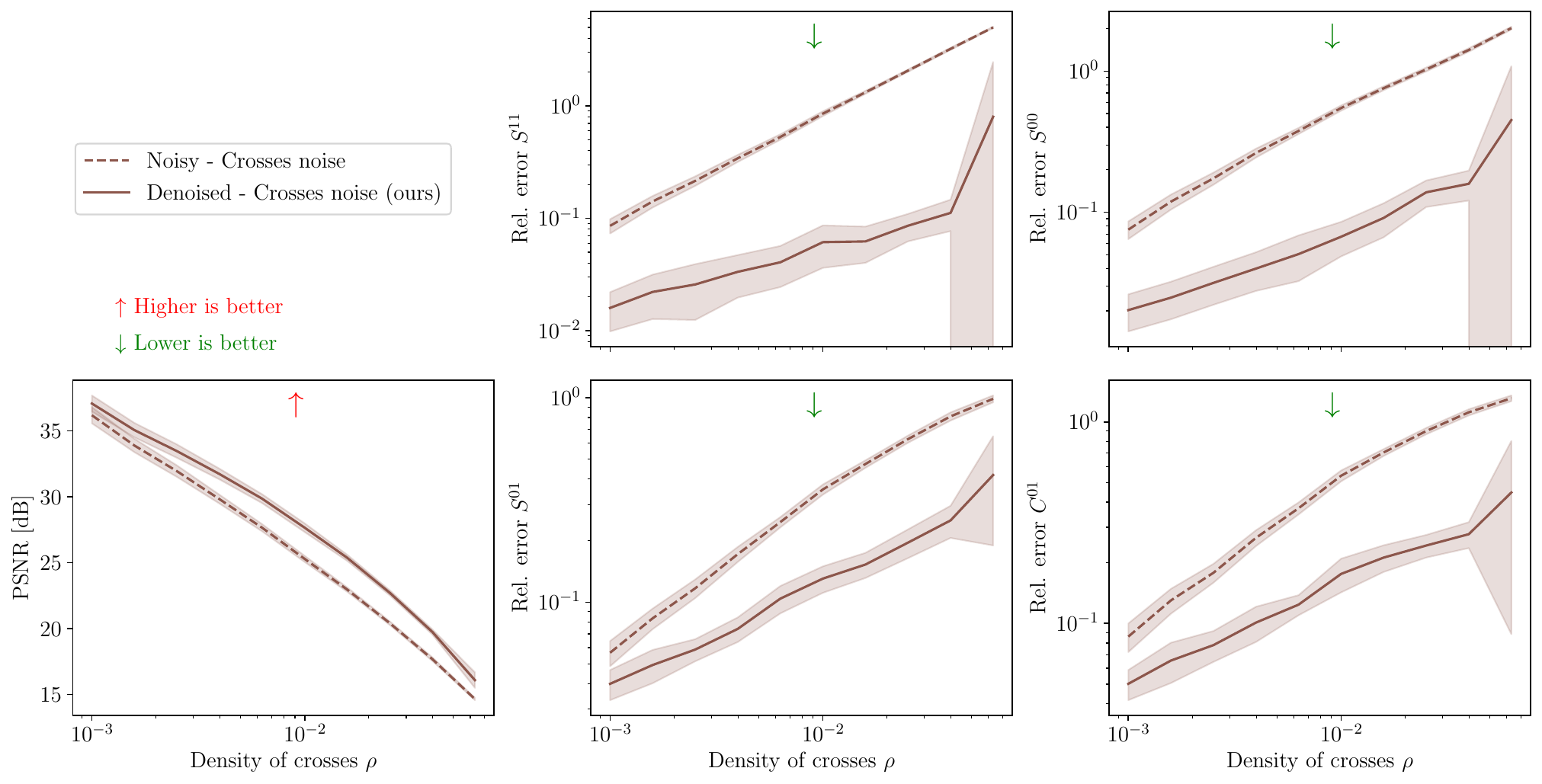}
    \caption{Same as Fig.~\ref{fig:wph_dust_images} for the ``crosses'' noises as a function of the density of crosses $\rho$.}
    \label{fig:wph_dust_stats_crosses}
\end{figure}

\paragraph{Results.} In Fig.~\ref{fig:wph_dust_images}, we compare the noise-free dust image $x_0$ with examples of its noisy $y$ and denoised $\hat{x}_0$ versions for each noise process. In the case of the colored noises, we also include the BM3D-denoised images in the bottom row for reference. Our method effectively reduces noise while preserving the original image's structure. Even with the ``crosses'' noise, our algorithm successfully removes most of the crosses, the remaining ones having been most likely confused with actual structures of $x_0$. We evaluate the quality of the denoising in terms of peak signal-to-noise ratio (PSNR), significantly improving it in all cases. However, BM3D outperforms our method for colored noises, which is not surprising as our approach was not explicitly designed for that.

We further evaluate our algorithm's performance for colored noises using PSNR and the relative error of $\phi(\hat{x}_0)$. In Fig.~\ref{fig:wph_dust_stats}, we present the PSNR and relative errors of WPH statistics for different coefficient classes as a function of noise level $\sigma$. Our method consistently improves PSNR, but BM3D performs better across all noise levels. We note that our method's performance degrades for very high $\sigma$, potentially due to structure hallucination caused by extreme initial noise.\footnote{WPH statistics may also define generative models with a sampling procedure sharing important similarities with Algorithm~\ref{alg:vanilla_scs} (see \cite{Allys2020, Zhang2021, Regaldo2021, Regaldo2023}).} In terms of WPH statistics relative error, our method effectively reduces noise impact and outperforms BM3D in most cases. Notably, it excels in $S^{11}$ and $S^{01}$ coefficients, except for the high-noise regime in $S^{11}$. However, for $S^{00}$ and $C^{01}$ coefficients, in the cases of blue and white noise, BM3D performs comparably or better than our method. Since a perfect PSNR implies perfect statistics recovery, we interpret this as the sign that a regular denoising algorithm, when adapted to the noise process, can also provide precise estimates of $\phi(x_0)$. However, we point out that the normalization of the WPH statistics may play a crucial role on these metrics (see App.~\ref{app:wph}), and a fairer comparison should explore its role in this precise setting. We report similar results in Fig.~\ref{fig:wph_dust_stats_crosses} for the ``crosses'' noise. These demonstrate a clear mitigation of the noise in all regimes.

We report in Figs.~\ref{fig:wph_quijote_images}, \ref{fig:wph_quijote_stats}, \ref{fig:wph_imagenet_images}, \ref{fig:wph_imagenet_stats}, and \ref{fig:wph_imagenet_random_sel_stats} equivalent results for the LSS and ImageNet data. The overall conclusions are the same.

\subsection{ConvNet-based Representation}
\label{sec:vgg_exp}

\paragraph{Definition.} To further explore the dependence on the representation $\phi$ for the performance of a statistical component separation for image denoising, we define a representation based on feature maps of a ConvNet. We make use of the VGG-19\_BN network~\citep{Simonyan2014} which was trained for image classification on ImageNet. We only employ the first two convolutional blocks of the networks (each of these being made of a sequence of \texttt{Conv2d}, \texttt{BatchNorm2d}, and \texttt{ReLU} blocks), which, for each $224\times 224$ RGB image $x$, output a set of 64 $224\times 224$ feature maps $f(x) = (f_1(x), \dots, f_{64}(x))$. We define a representation $\phi(x)$ of dimension $K = 64$ from these feature maps by taking the squared Euclidean norm of each feature map, that is:
\begin{equation}
    \phi(x) = (\norm{f_1(x)}^2, \dots, \norm{f_{64}(x)}^2).
\end{equation}
This representation shares structural similarities with the WPH statistics and its parent the wavelet scattering transform statistics~\citep{Bruna2013, Mallat2016}. However, contrary to these previous transforms which employ generic wavelet filters, the filters of the VGG convolutional layers are specifically trained for the analysis of ImageNet data.

\begin{wrapfigure}{r}{0.35\textwidth}
  \centering
  \vspace{-0.6cm}
  \includegraphics[width=1.0\linewidth]{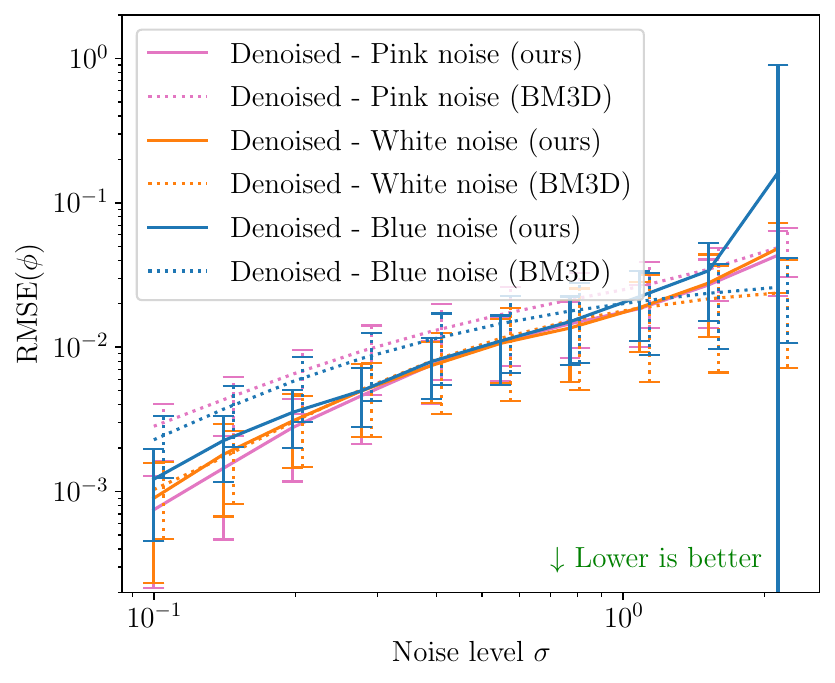}
  \caption{Root-mean-square error on the coefficients of the ConvNet-based representation $\phi$ as a function of the noise level $\sigma$ for the denoised ImageNet images for each type of colored noise.}
  \label{fig:vgg_imagenet_dvgg}
\end{wrapfigure}

\paragraph{Results.} We conduct the same experiment as in Sect.~\ref{sec:wph_exp} in this setting focusing on randomly selected ImageNet images. We show in Fig.~\ref{fig:vgg_imagenet_images} examples of resulting images before and after denoising, and we show in Fig.~\ref{fig:vgg_imagenet_dvgg} the root-mean-square error on the coefficients of the representation $\phi$ as a function of $\sigma$ (mean and standard deviation computed across 50 randomly selected test images). While Fig.~\ref{fig:vgg_imagenet_dvgg} indicates that our method significantly reduces the impact of the noise on the representation $\phi$, we see in Fig.~\ref{fig:vgg_imagenet_images} that it performs poorly as a regular denoiser in comparison to BM3D. This is an interesting result, as it shows that the representation $\phi$ we have built here is not as well suited for image denoising as the WPH representation was. Given the fact that VGG networks were trained for a classification problem, it is likely that their feature maps are partly robust to the noise in the input. A spectral analysis of the denoised maps further shows that there remains a significant amount of noise in the small scales, suggesting that the feature maps are weakly impacted by the noise at small scales.

\section{Diffusive Statistical Component Separation}
\label{sec:beyond}

{Taking inspiration from the diffusion-based generative modeling literature~\citep{Ho2020, Song2021}, we investigate to which extent statistical component separation methods can benefit from the idea of breaking down the optimization into a sequence of optimization problems involving noises of smaller amplitude.}

We introduce in Sect.~\ref{sec:diffusive_algo} a new algorithm that leverages this idea in the case of Gaussian noises and apply it to the dust data introduced in Sect.~\ref{sec:exp}. We then study in Sect.~\ref{sec:taylor_exp} the limit regime where the amplitude of the noise tends to zero. This gives us an alternative way to perform statistical component separation giving promising results in specific circumstances.

\subsection{A ``Diffusive'' Algorithm}
\label{sec:diffusive_algo}

\paragraph{Stable process.} We assume that $\epsilon_0$ is a stable noise process, that is for any $\epsilon_1, \dots, \epsilon_P \overset{i.i.d.}{\sim} p(\epsilon_0)$, we have $\sum_{i=1}^P \epsilon_i \sim \alpha \epsilon_0 + \beta$ for some scalar constants $\alpha$ and $\beta$. A practical example is that of Gaussian processes, since for $\epsilon_0 \sim \mathcal{N}(0, \Sigma)$, we clearly have ${\sum_{i=1}^P \epsilon_i \sim P\epsilon_0 \sim \mathcal{N}(0, P\Sigma)}$. Introducing $\alpha \in \mathbb{R}^P$ such that $\alpha_i > 0$ for all $i$, and ${\norm{\alpha}^2 = \sum_i \alpha_i^2 = 1}$, and provided that $\Eb{\epsilon_0} = 0$, we can break down $\epsilon_0$ into ``smaller'' independent noise processes as follows:
\begin{equation}
    \epsilon_0 = \sum_{i=1}^P\alpha_i \epsilon_i, \text{ with } \epsilon_1, \dots, \epsilon_P \overset{i.i.d.}{\sim} p(\epsilon_0),
\end{equation}
where the variance of $\alpha_i\epsilon_i$ can be made arbitrarily small by taking a sufficiently small value for $\alpha_i$.

\paragraph{Algorithm.} In this setting, we leverage this decomposition by breaking down the minimization of $\mathcal{L}$ into ``simpler'' optimization problems involving noises of smaller variance, with the goal of finding a better optimum $\hat{x}_0$. We introduce Algorithm~\ref{alg:diffusive_scs} for this purpose. This algorithm starts from $\hat{x}_P = y$ and builds a sequence of signals $\hat{x}_{P - 1}, \dots, \hat{x}_0$ such that for all $i \in \{P-1, \dots, 0\}$:
\begin{equation}
    \hat{x}_i \in \underset{x}{\textrm{arg min}}~\mathcal{L}(x ; \alpha_{i+1}, \hat{x}_{i+1}) = \Eb[\epsilon \sim p(\epsilon_0)]{\norm{\phi(x + \alpha_{i+1}\epsilon) - \phi(\hat{x}_{i+1})}^2}.
\end{equation}

\begin{algorithm}
\caption{Diffusive Statistical Component Separation}\label{alg:diffusive_scs}
\begin{algorithmic}
\State \textbf{Input:} $y$, $p(\epsilon_0)$, $Q$, $T$, $P$, $\alpha \in \mathbb{R}^P$ with $\norm{\alpha}^2 = 1$ and $\alpha_i > 0$, gradient-based optimizer
\State \textbf{Initialize:} $\hat{x}_P = y$
\For{$i = P - 1 \dots 0$}
    \State $\hat{x}_{i} = \hat{x}_{i+1}$
    \For{$j = 1 \dots T$}
        \State \textbf{sample} $\epsilon_1, \dots, \epsilon_Q \sim p(\epsilon_0)$
        \State $\hat{\mathcal{L}}(\hat{x}_i) =  \sum_{k=1}^Q \norm{\phi(\hat{x}_i + \alpha_{i+1}\epsilon_k) - \phi(\hat{x}_{i+1})}^2 / Q$
        \State $\hat{x}_i \gets \textsc{one\_step\_optim}\left[\hat{x}_i, \nabla\hat{\mathcal{L}}(\hat{x}_i)\right]$
    \EndFor
\EndFor
\State \Return $\hat{x}_{0}$
\end{algorithmic}
\end{algorithm}

A sufficient condition for a perfect reconstruction to be achieved is that $\hat{x}_i \approx x_0 + \tilde{\epsilon}_i$ with ${\tilde{\epsilon}_i \sim (1 - \sum_{j =0}^{P - i - 1}\alpha_{P-j}^2)^{1/2} \epsilon_0}$. We do not expect this strong condition to hold for arbitrary functions $\phi$, but the design of $\phi$ should be guided in that sense.

We also note that a stepwise approach had also been employed in \cite{Delouis2022}, and we draw some connections between the two approaches in App.~\ref{app:connection_to_delouis}. We show that these two approaches rely on related objective functions. A further exploration of the pros and cons of each of these two algorithms is left for future work.

\paragraph{Experiment.} We apply Algorithm~\ref{alg:diffusive_scs} to the dust data introduced in Sect.~\ref{sec:exp} in the case of the Gaussian white noise and the WPH representation with $\alpha_i = 1/\sqrt{P}$ and $P = \lfloor 10\sigma \rfloor$. We compare in Fig.~\ref{fig:wph_dust_stats_diffusive} the results of this algorithm to those of  Algorithm~\ref{alg:vanilla_scs}. We see that this approach slightly improves the results for every metric except for the relative error on the $C^{01}$ coefficients at an intermediate noise level where these are slightly deteriorated. Although these numerical experiments are not showing significant improvements, having at least consistent results suggest that, from a theoretical perspective, component separation methods can be understood as the aggregation of these small optimization problems. We push this idea further in the rest of this section.

\begin{figure}
    \includegraphics[width=1.0\textwidth]{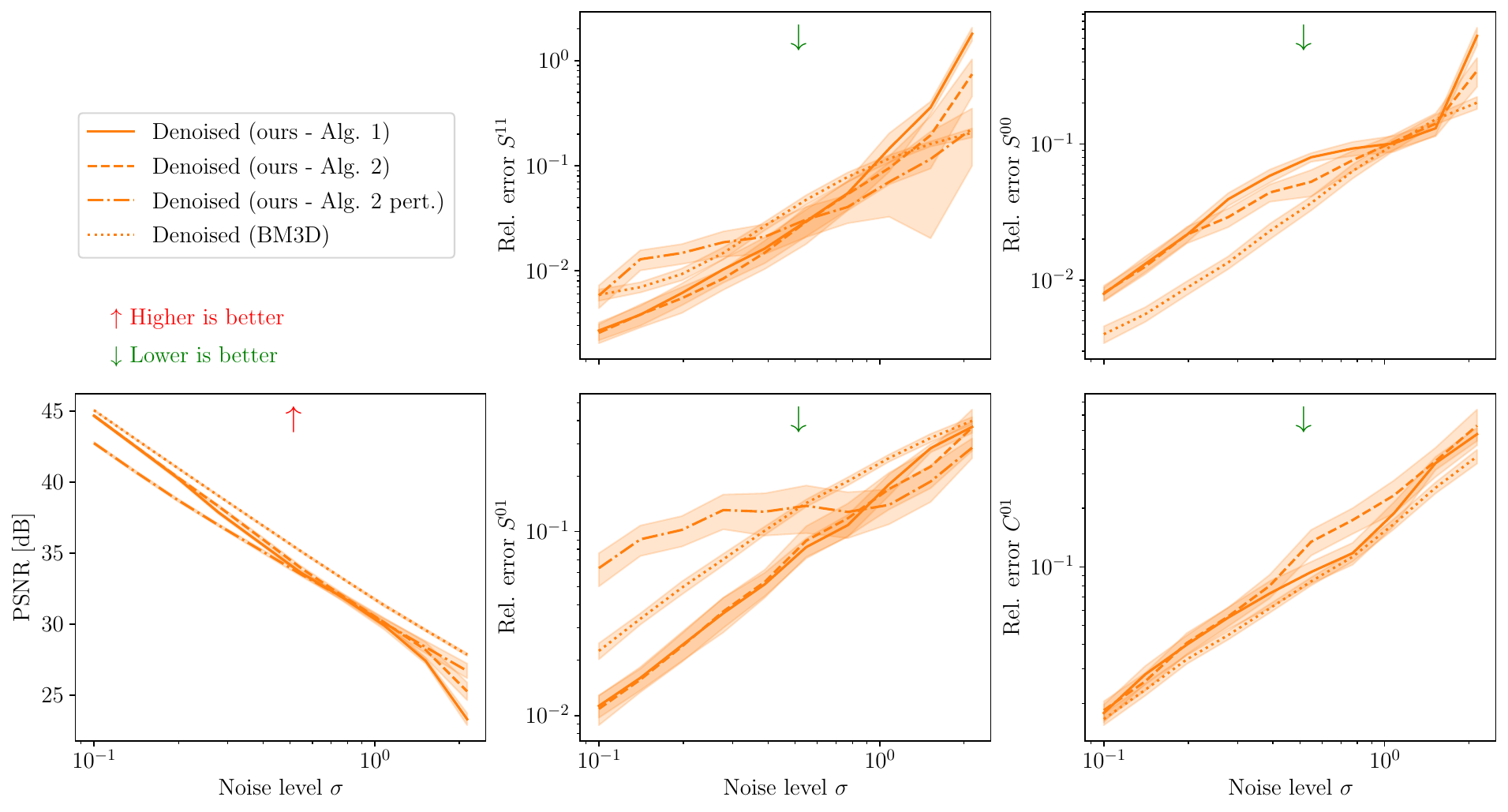}
    \caption{Same as Fig.~\ref{fig:wph_dust_stats} for the white noise case only, and comparing results obtained with Algorithm~\ref{alg:vanilla_scs} (see Sect.~\ref{sec:wph_exp}), Algorithm~\ref{alg:diffusive_scs} with a non-perturbative loss (see Sect.~\ref{sec:diffusive_algo}), Algorithm~\ref{alg:diffusive_scs} with the perturbative loss introduced in Sect.~\ref{sec:pert_algo}, and BM3D.}
    \label{fig:wph_dust_stats_diffusive}
\end{figure}

\subsection{Limit Regime for Infinitely Small Noises}
\label{sec:taylor_exp}

We study the limit regime of infinitely small noises to give a more formal perspective to Algorithm~\ref{alg:diffusive_scs}. We introduce $\mathcal{L}(x, \alpha) = \Eb{\norm{\phi(x + \alpha\epsilon) - \phi(y)}^2}$ and expands it with respect to $\alpha \in \mathbb{R}$.

\begin{proposition}
For $\phi$ a twice differentiable function, and $\epsilon_0$ arbitrarily distributed with $\Eb{\epsilon_0} = 0$, we have:
\begin{equation}
\label{eq:loss_expanded}
    \mathcal{L}(x, \alpha) = \norm{\phi(x) - \phi(y)}^2 + \alpha^2\left[\langle J_\phi(x)^TJ_\phi(x), \Sigma \rangle_{\rm F} + \langle H_\phi(x), \Sigma \rangle_{\rm F}\cdot\left(\phi(x) - \phi(y)\right)\right] + o(\alpha^2),
\end{equation}
where $J_\phi(x)$ is the Jacobian matrix of $\phi$ (of size $K\times M$), $H_\phi(x)$ is its Hessian tensor (of rank 3, and size $K\times M \times M$), and $\Sigma$ is the covariance matrix of $\epsilon_0$.\footnote{We denote by $\langle H_\phi(x), \Sigma \rangle_{\rm F}$ the vector of $\mathbb{R}^K$ such that component $i$ equals $\langle H_{\phi_i}(x), \Sigma \rangle_{\rm F}$.}
\end{proposition}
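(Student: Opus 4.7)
The plan is a straightforward second-order Taylor expansion of $\phi$ around $x$, followed by expansion of the squared norm and term-by-term computation of the expectation using $\Eb{\epsilon} = 0$ and $\Cov{\epsilon}{\epsilon} = \Sigma$.

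First I would write, for each fixed $\epsilon$ and for small $\alpha$,
\begin{equation*}
\phi(x + \alpha\epsilon) = \phi(x) + \alpha J_\phi(x)\epsilon + \tfrac{\alpha^2}{2}\,H_\phi(x)[\epsilon,\epsilon] + o(\alpha^2),
\end{equation*}
where $H_\phi(x)[\epsilon,\epsilon] \in \mathbb{R}^K$ has $k$-th component $\sum_{a,b} H_\phi(x)_{k,a,b}\,\epsilon_a\epsilon_b$. Setting $r := \phi(x) - \phi(y)$ and expanding the squared norm gives
\begin{equation*}
\norm{\phi(x+\alpha\epsilon)-\phi(y)}^2 = \norm{r}^2 + 2\alpha\, r\cdot J_\phi(x)\epsilon + \alpha^2\bigl[\norm{J_\phi(x)\epsilon}^2 + r\cdot H_\phi(x)[\epsilon,\epsilon]\bigr] + o(\alpha^2),
\end{equation*}
where the remainder absorbs the cross-term $\alpha^2 \cdot \alpha\,(\dots)$ and higher orders.

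Second, I would take expectations term by term. The first-order term vanishes since $\Eb{J_\phi(x)\epsilon} = J_\phi(x)\Eb{\epsilon} = 0$. For the quadratic contribution I use the standard identity
\begin{equation*}
\Eb{\norm{J_\phi(x)\epsilon}^2} = \Eb{\epsilon^T J_\phi(x)^T J_\phi(x)\epsilon} = \tr\!\bigl(J_\phi(x)^T J_\phi(x)\,\Sigma\bigr) = \langle J_\phi(x)^T J_\phi(x), \Sigma\rangle_{\rm F}.
\end{equation*}
For the Hessian contribution, component $k$ of $\Eb{H_\phi(x)[\epsilon,\epsilon]}$ is $\sum_{a,b} H_\phi(x)_{k,a,b}\,\Sigma_{a,b} = \langle H_\phi(x)_k, \Sigma\rangle_{\rm F}$, so $r\cdot \Eb{H_\phi(x)[\epsilon,\epsilon]} = \langle H_\phi(x), \Sigma\rangle_{\rm F}\cdot r$, with the notation of the statement. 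Collecting these pieces yields exactly the claimed expansion.

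The only delicate step is justifying the exchange of expectation and the $o(\alpha^2)$ remainder; the cleanest way is to assume that $\epsilon_0$ has finite moments of sufficient order (e.g. up to order $2+\delta$) so that the remainder is uniformly dominated in a neighbourhood of $\alpha = 0$ by an integrable function of $\epsilon$, allowing dominated convergence to transfer the $o(\alpha^2)$ through the expectation. Under this mild integrability hypothesis (implicitly used throughout the paper), all other manipulations are elementary tensor algebra and the result follows. The main obstacle is therefore bookkeeping of the rank-3 Hessian tensor and being precise about what $\langle H_\phi(x), \Sigma\rangle_{\rm F}\in\mathbb{R}^K$ denotes in the final formula.
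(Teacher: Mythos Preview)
Your proposal is correct and follows essentially the same route as the paper's proof: Taylor expand $\phi(x+\alpha\epsilon)$ to second order, plug into the squared norm, take expectations using $\Eb{\epsilon}=0$ and $\Eb{\epsilon\epsilon^T}=\Sigma$, and identify the two Frobenius inner products. If anything, you are more careful than the paper, which silently passes the $o(\alpha^2)$ through the expectation without discussing the integrability condition you flag.
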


The proof is given in App.~\ref{app:loss_expanded_proof}. The zeroth-order term of this expansion has a clear interpretation. It prevents the representation of $x$ from moving too far from that of $y$. However, the second order term is more intricate as it combines first and second order derivatives of $\phi$, which are weighted by the covariance of the noise. As an example, let us consider $\Sigma = \sigma^2I_M$. We then get:
\begin{align}
    \langle J_\phi(x)^TJ_\phi(x), \Sigma \rangle_{\rm F} &= \sigma^2\norm{J_\phi(x)}_{\rm F}^2, \\
    \langle H_\phi(x), \Sigma \rangle_{\rm F}\cdot\left(\phi(x) - \phi(y)\right) &= \sigma^2 \tr \left[ H_\phi(x)\right]\cdot\left(\phi(x) - \phi(y)\right).
\end{align}
The first term is proportional to the squared norm of the Jacobian matrix, while the second term is a dot product between the vector of Hessian traces with the vector $\phi(x) - \phi(y)$. The trace of $H_{\phi_i}(x)$ directly relates to the mean curvature of the function $\phi_i$ at point $x$, so that this dot product quantifies the alignment between $\phi(x) - \phi(y)$ and the vector of mean curvatures for each component of $\phi$.

\subsection{A Perturbative Algorithm}
\label{sec:pert_algo}

We note that contrary to Eq.~\eqref{eq:loss_base}, Eq.~\eqref{eq:loss_expanded} got rid of the expected value over $\epsilon$, and the second-order expansion of $\mathcal{L}(x, \alpha)$ now only depends on the covariance matrix $\Sigma$ of the noise, which is often known or easy to estimate from a set of noise samples. Then, provided $\mathcal{L}(x, \alpha)$ can be correctly approximated by its second-order expansion in $\alpha$, one can evaluate the loss in a much more straightforward way that does not rely on a noisy Monte Carlo estimate. In this limit regime, the computational challenge lies in efficiently computing the second-order term of Eq.~\eqref{eq:loss_expanded} in a differentiable way.

We compute in App.~\ref{app:wph_exp_proof} the relevant analytical expressions to do so for $\phi$ the WPH representation introduced in Sect.~\ref{sec:wph_exp}. We then apply Algorithm~\ref{alg:diffusive_scs} for the dust image in the Gaussian white noise case with $ \alpha_i = 1/\sqrt{P}$, $P = \lfloor 10\sigma \rfloor$, and $T = 10$, and by replacing the loss $\mathcal{L}$ with by its truncated second-order Taylor expansion as explicited in Eq.~\eqref{eq:loss_expanded}. We empirically found that the most stable setting is the one where the WPH statistics only include the $S^{11}$ and $S^{01}$ coefficients. We report in Fig.~\ref{fig:wph_dust_stats_diffusive} the quantitative results in this setting for the PSNR and relative errors on these coefficients as a function of $\sigma$. Our method performs remarkably well in comparison to the previous results. It is very close to BM3D in terms of PSNR and outperforms it in the high noise regime in terms of relative errors on the $S^{11}$ and $S^{01}$ coefficients. The fact that the relative error for the $S^{01}$ coefficients is higher than for the noisy data in the low noise regime however suggests a form of instability for this range of $\sigma$. Nevertheless, we find these results promising and a clear demonstration of the relevance of this ``diffusive'' perspective on statistical component separation methods.

\section{Conclusion}
\label{sec:conclusion}

This paper has explored several aspects of statistical component separation methods. Section~\ref{sec:exploration} has exhibited analytically the global minimizers of $\mathcal{L}$ for several examples of representations $\phi$ in the case of Gaussian white noise. We have shown that a linear $\phi$ cannot extract any information on $x_0$, while a simple quadratic representation leads to a form of \texttt{sqrt}-thresholding of the observation~$y$. For $\phi$ a power spectrum representation, which can be viewed as a more general quadratic representation, the minimizers of $\mathcal{L}$ lead to relevant estimations of $\phi(x_0)$. Then, in Sect.~\ref{sec:exp}, we have approached numerically the minimizers of $\mathcal{L}$ introducing Algorithm~\ref{alg:vanilla_scs} in an image denoising setting for two representations where analytical calculations are intractable: 1) WPH statistics, 2) statistics derived from the feature maps of a ConvNet. For 1), Algorithm~\ref{alg:vanilla_scs} acts as a regular image denoiser while not explicitly constructed for this. Although it does not outperform BM3D in terms of the PSNR metric, it better recovers the coefficients $\phi(x_0)$ for most classes of coefficients and experimental settings. Additionally, it may extend to arbitrary noise processes as it was illustrated with an exotic noise made of small crosses. For 2), the impact on the noise was clearly mitigated in $\phi(\hat{x}_0)$, but the resulting images $\hat{x}_0$ were still very noisy, showing that this representation is less suited for image denoising. Finally, in Sect.~\ref{sec:beyond}, we have introduced Algorithm~\ref{alg:diffusive_scs}, a ``diffusive'' statistical component separation method that can be applied in contexts where the noise is a stable process. These ideas led in some cases to better results than Algorithm~\ref{alg:vanilla_scs} in a denoising setting. And more importantly, it supported the idea that statistical component separation methods can be described as the sequence of optimization problems with noises of smaller amplitudes. This idea will be pushed further in future work.


\subsubsection*{Acknowledgments}

It is a pleasure to thank Erwan Allys, Jean-Marc Delouis, Stéphane Mallat, Loucas Pillaud-Vivien, and Léo Vacher for valuable discussions.


\bibliographystyle{tmlr} 
\bibliography{bib/bib.bib}


\appendix
\counterwithin{figure}{section}

\newpage

\section{Proofs}

\subsection{Proofs of Sect.~\ref{sec:exploration}}

We give the proofs of the results presented in Sect.~\ref{sec:exploration}. Assuming that $\epsilon_0$ is a Gaussian white noise of variance $\sigma^2$, that is $p(\epsilon_0) \sim \mathcal{N}(0, \sigma^2I_M)$, we compute the set of global minimizers of $\mathcal{L}$ as defined in Eq.~\eqref{eq:loss_base} in the following cases:
\begin{enumerate}
    \item $\phi(x) = Ax$ with $A$ an injective matrix of size $K\times M$,
    \item $\phi(x) = x^2$ for $M=1$,
    \item $\phi(x) = \norm{Ax}^2$ with $A$ an injective matrix of size $K^\prime\times M$,
    \item$\phi(x) = (\norm{A_1}^2, \dots, \norm{A_K}^2)$ with $A_1, \dots, A_K$ $K$ injective matrices of size $K^\prime\times M$ such that $A_1^TA_1, \dots, A_K^TA_K$ are co-diagonalizable.
\end{enumerate}

For each case, note that $\mathcal{L}(x)$ is infinitely differentiable and obviously bounded from below by 0, and we will show below that $\underset{x \rightarrow +\infty}\lim \mathcal{L}(x) = +\infty$ to demonstrate the existence of a global minimum.

We will then determine the global minimizers by studying the zeros of the gradient of $\mathcal{L}$, which, in its general form, reads:
\begin{equation}
\label{eq:gradient_of_loss}
    \nabla \mathcal{L}(x) = 2\Eb[\epsilon\sim p(\epsilon_0)]{J_\phi(x+\epsilon)^T\cdot(\phi(x+\epsilon) - \phi(y))},
\end{equation}
where $J_\phi$ is the Jacobian matrix of $\phi$.

\subsubsection{\texorpdfstring{$\phi(x) = Ax$}{phi(x) = Ax} with \texorpdfstring{$A$}{A} injective}
\label{app:phi_linear_proof}

\begin{manualprop}{2.1}
 For $\phi(x) = Ax$ with $A$ injective, $\mathcal{L}$ has a unique global minimizer equal to $y - \Eb{\epsilon_0}$.
\end{manualprop}

\begin{proof}
We have:
\begin{equation}
    \mathcal{L}(x) = \Eb{\norm{A(x+\epsilon -y)}^2}.
\end{equation}
If $A$ is injective of size $K\times M$ (in which case, we necessarily have $K \geq M$), $A^TA$ is positive-definite and we call $\lambda_{\rm min} > 0$ its smallest eigenvalue. We have ${\norm{Ax}^2 \geq \lambda_{\rm min}\norm{x}^2}$, so that:
\begin{equation}
    \mathcal{L}(x)\geq  \lambda_{\rm min}\Eb{\norm{x+\epsilon -y}^2} = \lambda_{\rm min}\norm{x}^2 + O(\norm{x}).
\end{equation}
This shows that $\underset{x \rightarrow +\infty}\lim \mathcal{L}(x) = +\infty$ so that a global minimum does exist.

Now, noticing that $J_\phi(x) = A$, we have:
\begin{align}
    \nabla \mathcal{L}(x) &= 2A^T\cdot\Eb{\phi(x+\epsilon) - \phi(y)}, \\
    &= 2A^T\cdot \phi(\Eb{x+\epsilon - y}), \\
    &= 2A^T\cdot \phi(x - y + \Eb{\epsilon}), \\
    &= 2A^TA\cdot (x-y + \Eb{\epsilon}).
\end{align}
For injective $A$, $A^TA$ is invertible, and we have ${\nabla \mathcal{L}(x) = 0 \iff x = y - \Eb{\epsilon}}$. This shows that, in this case, the only global minimizer is $y - \Eb{\epsilon}$.

Note that this result is independent of the noise distribution $p(\epsilon_0)$.
\end{proof}

This result can be extended to non-injective matrices $A$. In this case, there is an affine subspace of solutions with the same loss value along shifts within the null space of $A$. By a similar procedure as above, we obtain that $x = y - \Eb{\epsilon} + v$, where $v\in\textrm{ker} A$.
Non-injectivity does not change the overall conclusion that, besides the subtraction of the mean of the noise, denoising does not take place.

\subsubsection{\texorpdfstring{$\phi(x) = x^2$}{phi(x) = x2} for \texorpdfstring{$M=1$}{M = 1}}
\label{app:phi_quadratic_proof}

\begin{manualprop}{2.2}
 For $\phi(x) = x^2$ and $p(\epsilon_0) \sim \mathcal{N}(0, \sigma^2I_M)$, the global minimizers of $\mathcal{L}$ are 0 when $y^2 \leq 3\sigma^2$, and $\pm\sqrt{y^2 - 3\sigma^2}$ when $y^2 > 3\sigma^2$.
\end{manualprop}
\begin{proof}
For $\phi(x) = x^2$ and $K = M = 1$, we can show by expanding $\mathcal{L}$ that:
\begin{equation}
    \mathcal{L}(x) = \norm{x}^4 + O(\norm{x}^2) \underset{x\rightarrow +\infty}{\longrightarrow} +\infty.
\end{equation}
This proves the existence of a global minimum.

Now, injecting $\phi^\prime(x)=2x$ in Eq.~\eqref{eq:gradient_of_loss}, we get:
\begin{align}
    \mathcal{L}^\prime(x) &= 4 \Eb{(x + \epsilon)((x + \epsilon)^2 - y^2)},\\
    &= 4 \Eb{(x + \epsilon)^3 - (x + \epsilon)y^2}, \\
    &= 4 \Eb{x^3 + {\epsilon}^3 + 3x^2\epsilon + 3 x{\epsilon}^2 - xy^2 - \epsilon y^2}, \\
    &= 4 \left(x^3 + x(3\sigma^2 - y^2)\right),
\end{align}
where we have used the fact that $\Eb{\epsilon} = \Eb{{\epsilon}^3} = 0$ and $\Eb{{\epsilon}^2} = \sigma^2$.
The zeros of $\mathcal{L}^\prime$ are thus $x = 0$ or $x = \pm\sqrt{y^2 - 3\sigma^2}$ when $y^2 \geq 3\sigma^2$.

By looking at the values of $\mathcal{L}''(x) = 12x^2 + 12\sigma^2 - 4y^2$ at these points, we find that whenever $y^2 > 3\sigma^2$, $\mathcal{L}''(0) < 0$ so that $0$ cannot be a local minimizer. In this case, we only have $\pm\sqrt{y^2 - 3\sigma^2}$ left to be the global minimizers.

All of this shows that the global minimizers of $\mathcal{L}$ are 0 when $y^2 \leq 3\sigma^2$, and $\pm\sqrt{y^2 - 3\sigma^2}$ when $y^2 > 3\sigma^2$.
\end{proof}

This denoising function $x = \textrm{sgn}(y)\sqrt{\max(0, y^2 - 3\sigma^2)}$ is similar to the soft-thresholding function $\textrm{st}_\lambda(y)=\textrm{sgn}(y)\max(|y - \lambda|, 0)$, sharing the flat region around the origin. The main difference is that it contrary to soft-thresholding, this function approaches the identity function for high values. It also has infinite derivatives at the border of its flat region.

\subsubsection{\texorpdfstring{$\phi(x) = \norm{Ax}^2$}{phi(x) = |Ax|2} with \texorpdfstring{$A$}{A} injective}
\label{app:phi_linear_quadratic_proof}

\begin{manualprop}{2.3}
For $\phi(x) = \norm{Ax}^2$ with $A$ injective and ${p(\epsilon_0) \sim \mathcal{N}(0, \sigma^2I_M)}$, introducing:
\begin{equation}
    \Lambda = \{\lambda \in \mathrm{sp}(A^TA)~\text{such that}~ \norm{A y}^2 - \Eb{\norm{A \epsilon}^2} - 2\sigma^2\lambda \geq 0\},
\end{equation}
if $\Lambda = \emptyset$, then the global minimizer of $\mathcal{L}$ is  unique equal to 0, otherwise, the minimizers are the eigenvectors $x$ of $A^TA$ associated with $\min \Lambda$ such that ${\norm{A x}^2 = \norm{A y}^2 - \Eb{\norm{A \epsilon}^2} - 2\sigma^2\min \Lambda}$.\footnote{We can verify that for $A$ a matrix of size $1\times 1$, we recover a result equivalent to that of Prop.~\ref{prop:quad}.}
\end{manualprop}
\begin{proof}
With $A$ an injective matrix of size $K^\prime\times M$, we have:
\begin{align}
    \mathcal{L}(x) &= \Eb{\left(\norm{A(x+\epsilon)}^2 - \norm{Ay}^2\right)^2}, \\
    &= \Eb{\norm{A(x+\epsilon)}^4 + \norm{Ay}^4 - 2\norm{A(x+\epsilon)}^2\norm{Ay}^2}.
\end{align}
Writing that $\norm{A(x+\epsilon)}^2 = \norm{Ax}^2 + \norm{A \epsilon}^2 + 2Ax\cdot A\epsilon$, we can show that:
\begin{align}
    \Eb{\norm{A(x+\epsilon)}^2} &= \norm{Ax}^2 + \Eb{\norm{A\epsilon}^2}, \\
    \Eb{\norm{A(x+\epsilon)}^4} &= \norm{Ax}^4 + \Eb{\norm{A \epsilon}^4} + 4\sigma^2\norm{A^TAx}^2 + 2\norm{Ax}^2\Eb{\norm{A \epsilon}^2},
\end{align}
where we have used the facts that $\Eb{A \epsilon} = 0$ and $\Eb{\epsilon {\epsilon}^T} = \sigma^2I_M$. We can then rewrite $\mathcal{L}$ as follows:
\begin{equation}
\label{eq:app_loss_phi_ps}
    \mathcal{L}(x) = \norm{Ax}^4 + 4\sigma^2\norm{A^TAx}^2 + 2\norm{Ax}^2\left(\Eb{\norm{A \epsilon}^2} - \norm{Ay}^2\right) + \mathcal{L}(0).
\end{equation}
Same as Sect.~\ref{app:phi_linear_proof}, with $\lambda_{\rm min} = \min \mathrm{sp}(A^TA) > 0$, where $\mathrm{sp}(A^TA)$ is the spectrum of $A^TA$, we can show that:
\begin{equation}
    \mathcal{L}(x) \geq \lambda_{\rm min}^2\norm{x}^4 + O(\norm{x}^2),
\end{equation}
which leads to $\underset{x \rightarrow +\infty}\lim \mathcal{L}(x) = +\infty$ and proves the existence of a global minimum.

Starting from Eq.~\eqref{eq:app_loss_phi_ps} and using the fact that $\nabla\phi(x) = 2A^TA x$, we show that:
\begin{equation}
    \nabla\mathcal{L}(x) = 4\left(\norm{Ax}^2 + \Eb{\norm{A \epsilon}^2} - \norm{A y}^2\right)A^TA x + 8\sigma^2(A^TA)^2x.
\end{equation}
Since $A^TA$ is an invertible matrix, we then have:
\begin{equation}
    \nabla\mathcal{L}(x) = 0 \iff (\alpha(x) I_{M} + A^TA)x = 0,
\end{equation}
with $\alpha(x) = \frac1{2\sigma^2}\left(\norm{Ax}^2 + \Eb{\norm{A \epsilon}^2} - \norm{A y}^2\right)$.

Let us take a non-zero solution $x$ of $\nabla\mathcal{L} = 0$. Then, the matrix $\alpha(x) I_{M} + A^TA$ must be singular, which is equivalent to the fact that $-\alpha(x) = \lambda \in \mathrm{sp}(A^TA)$. Being a zero of $\nabla\mathcal{L}$ leads to ${A^TA x = \lambda x}$, so that $\norm{A x}^2 = x^TA^TAx = \lambda \norm{x}^2$ and then $\norm{A^TAx}^2 = \lambda \norm{Ax}^2$. By injecting this to Eq.~\eqref{eq:app_loss_phi_ps}, we finally get:
\begin{align}
    \mathcal{L}(x) &= \mathcal{L}(0) -\norm{Ax}^4, \\
    &= \mathcal{L}(0) - \left(\norm{A y}^2 - \Eb{\norm{A \epsilon}^2} - 2\sigma^2\lambda\right)^2.
\end{align}
Therefore, the global minimum is reached when $\lambda$ is the smallest eigenvalue of $A^TA$ constrained by ${\norm{Ax}^2 = \norm{Ay}^2 - \Eb{\norm{A \epsilon}^2} - 2\sigma^2\lambda \geq 0}$. Reciprocally, introducing
\begin{equation}
    \Lambda = \{\lambda \in \mathrm{sp}(A^TA)~\text{such that}~ \norm{A y}^2 - \Eb{\norm{A \epsilon}^2} - 2\sigma^2\lambda \geq 0\},
\end{equation}
we verify that, if $\Lambda = \emptyset$, the global minimizer of $\mathcal{L}$ is 0, and if $\Lambda \neq \emptyset$, the global minimizers are eigenvectors $x$ of $A^TA$ associated with $\min \Lambda$ such that ${\norm{x}^2 = \frac1{\min \Lambda}\left(\norm{Ay}^2 - \Eb{\norm{A \epsilon}^2}\right) - 2\sigma^2}$.
\end{proof}

\subsubsection{\texorpdfstring{$\phi(x) = (\norm{A_1x}^2, \dots, \norm{A_Kx}^2)$}{phi(x) = (|A1x|2, ..., |AKx|2)} with \texorpdfstring{$A_1, \dots, A_K$}{A1, ..., AK} injective and \texorpdfstring{$A_1^TA_1, \dots, A_K^TA_K$}{A1TA1, ..., AKTAK} co-diagonalizable}
\label{app:phi_linear_quadratic_gen_proof}

For any matrix $A$ of size $M\times P$ and a subset $\Xi_Q = \{k_1 < \dots < k_Q\}$ of $\{1, \dots, M\}$, we denote by $A_{\Xi_Q}$ the $Q\times P$ submatrix of $A$ such that ${\left[A_{\Xi_Q}\right]_{i, j} = A_{k_i, j}}$. We also denote by $A^+$ the Moore-Penrose inverse of the matrix $A$. Finally, we denote by $\mathrm{span} (e_1, \dots, e_n)$ the subspace generated by the vectors $e_1, \dots, e_n$.

\begin{manualprop}{2.4}
We consider $\phi(x) = (\norm{A_1x}^2, \dots, \norm{A_Kx}^2)$ with $A_1, \dots, A_K$ injective and ${p(\epsilon_0) \sim \mathcal{N}(0, \sigma^2I_M)}$. Introducing $S_{A_i} = A_i^TA_i$, we assume that $S_{A_1}, \dots, S_{A_K}$ are co-diagonalizable. We choose an orthonormal basis $(e_1, \dots, e_M)$ to co-diagonalize them (there always exists one). We call $\Lambda = (\lambda_{i, j})_{1\leq i \leq M, 1\leq j \leq K}$ the corresponding matrix of eigenvalues, where $\lambda_{i, j}$ is the eigenvalue of $S_{A_j}$ associated with $e_i$. We also assume that $\Lambda$ is of rank $K$ and that any submatrix of $\Lambda$ with size $K\times K$ is invertible.

In that case, introducing:
\begin{align}
    \Delta &= (\Delta_1, \dots, \Delta_M)~\text{with}~ \Delta_i = \sum_{j=1}^K\left(\lambda_{i, j}(\norm{A_j y}^2 - \Eb{\norm{A_j \epsilon}^2}) - 2\sigma^2\lambda_{i, j}^2\right), \\
    \Gamma &= \{\Xi \subset \{1, \dots, M\}~|~\forall 1\leq i \leq K, \left[\Lambda_{\Xi}^{+} \Delta_{\Xi}\right]_i \geq 0\},
\end{align}
if $\Gamma = \emptyset$, then the global minimizer of $\mathcal{L}$ is  unique equal to 0, otherwise, the minimizers are the $x \in \mathrm{span}(e_i)_{i\in \Xi}$ satisfying $(\norm{A_1x}^2, \dots, \norm{A_Kx}^2) = \Lambda_{\Xi}^{+} \Delta_{\Xi}$ where $\Xi \in \underset{\tilde{\Xi}\in \Gamma}{\mathrm{arg max}} \norm{\Lambda_{\tilde{\Xi}}^{+} \Delta_{\tilde{\Xi}}}$.
\end{manualprop}
\begin{proof}
We have:
\begin{align}
    \mathcal{L}(x) &= \Eb{\sum_{i=1}^K\left(\norm{A_i(x+\epsilon)}^2 - \norm{A_iy}^2\right)^2}, \\
    &= \sum_{i=1}^K\Eb{\norm{A_i(x+\epsilon)}^4 + \norm{A_iy}^4 - 2\norm{A_i(x+\epsilon)}^2\norm{A_iy}^2}.
\end{align}

Using the facts that $\Eb{A_i \epsilon}= 0$, $\Eb{A_i \epsilon\norm{A\epsilon}^2} = 0$, and $\Eb{\epsilon {\epsilon}^T} = \sigma^2I_M$, this loss reads:
\begin{align}
    \mathcal{L}(x) &= \sum_{i=1}^K \left[\norm{A_ix}^4 + 4\sigma^2\norm{A_i^TA_ix}^2 + 2\norm{A_ix}^2\left(\Eb{\norm{A_i \epsilon}^2} - \norm{A_iy}^2\right)\right] + \mathcal{L}(0).
\end{align}

The existence of the global minimum is guaranteed for the same reasons as Prop.~\ref{prop:power_spectrum}.

The expression of $\nabla \mathcal{L}(x)$ reads:
\begin{align}
    \nabla\mathcal{L}(x) &= \sum_{i=1}^K \left[4\left(\norm{A_ix}^2 + \Eb{\norm{A_i \epsilon}^2} - \norm{A_i y}^2\right)A_i^TA_i x + 8\sigma^2(A_i^TA_i)^2x\right],
\end{align}
and we have:
\begin{align}
    \nabla\mathcal{L}(x) = 0 \iff \left[\sum_{i=1}^K\alpha_{A_i}(x) S_{A_i} + S_{A_i}^2\right] x = 0, \label{eq:gradient_of_minima}
\end{align}
where we have introduced:
\begin{align}
    \alpha_{A_i}(x) &= \left(\norm{A_ix}^2 + \Eb{\norm{A_i \epsilon}^2} - \norm{A_i y}^2\right) / (2\sigma^2), \\
    S_{A_i} &= A_i^TA_i.
\end{align}
We notice that:
\begin{align}
    \nabla\mathcal{L}(x) = 0 &\Longrightarrow x^T\nabla\mathcal{L}(x) = 0, \\
    &\iff x^T\sum_{i=1}^NS_{A_i}^2x = - \sum_{i=1}^N\alpha_{A_i}(x) x^TS_{A_i}x, \\
    &\iff \sum_{i=1}^N\norm{S_{A_i}x}^2 = - \sum_{i=1}^N\alpha_{A_i}(x) \norm{A_ix}^2.
\end{align}
Therefore, with $x$ a global minimizer of $\mathcal{L}(x)$, we have:
\begin{align}
    \mathcal{L}(x) &= \sum_{i=1}^K\left[\norm{A_ix}^4 - 4\sigma^2\alpha_{A_i}(x) \norm{A_ix}^2 + 2\norm{A_ix}^2\left(\Eb{\norm{A_i \epsilon}^2} - \norm{A_iy}^2\right)\right] + \mathcal{L}(0),\\
    &=  \sum_{i=1}^K \norm{A_ix}^2\left[\norm{A_ix}^2 - 4\sigma^2\alpha_{A_i}(x) + 2\Eb{\norm{A_i \epsilon}^2} - 2\norm{A_iy}^2\right] + \mathcal{L}(0),\\
    &= \mathcal{L}(0) - \sum_{i=1}^K \norm{A_ix}^4.
\end{align}

We take a non-zero global minimizer $x = \sum_{i=1}^M x_ie_i$. Introducing $\Xi_Q = \{i \in \{1, \dots, M\}~|~x_i \neq 0\}$, for all $i \in\Xi_Q$, Eq.~\eqref{eq:gradient_of_minima} leads to:
\begin{align}
    &\sum_{j=1}^K\alpha_{A_j}(x) \lambda_{i, j} + \lambda_{i, j}^2 = 0, \\
    \iff &\sum_{j=1}^K\lambda_{i, j}\norm{A_j x}^2   = \sum_{j=1}^K\left(\lambda_{i, j}(\norm{A_j y}^2 - \Eb{\norm{A_j \epsilon}^2}) - 2\sigma^2\lambda_{i, j}^2\right), \\
    \iff & \left[\Lambda N_{Ax}\right]_i = \Delta_i,
\end{align}
where we have introduced:
\begin{align}
     N_{Ax} &= (\norm{A_1x}^2, \dots, \norm{A_Kx}^2)^T, \\
     \Delta_i &= \sum_{j=1}^K\left(\lambda_{i, j}(\norm{A_j y}^2 - \Eb{\norm{A_j \epsilon}^2}) - 2\sigma^2\lambda_{i, j}^2\right).
\end{align}
In a more compact form, we have:
\begin{equation}
\Lambda_{\Xi_Q} N_{Ax} = \Delta_{\Xi_Q}.
\end{equation}

By assumption, if $Q \geq K$ then the $K\times K$ matrix $\Lambda_{\Xi_Q}^T\Lambda_{\Xi_Q}$ is invertible, and if $Q < K$, then the $Q\times Q$ matrix $\Lambda_{\Xi_Q}\Lambda_{\Xi_Q}^T$ is invertible. Using the fact that $N_{Ax} = \Lambda_{\Xi_Q}^T N_{x, \Xi_Q}$ where $N_x = (x_1^2, \dots, x_M^2)^T$, we show that:
\begin{align}
    N_{Ax} = \begin{cases}
                (\Lambda_{\Xi_Q}^T\Lambda_{\Xi_Q})^{-1}\Lambda_{\Xi_Q}^T\Delta_{\Xi_Q},~\text{if}~Q \geq K, \\
                \Lambda_{\Xi_Q}^T (\Lambda_{\Xi_Q}\Lambda_{\Xi_Q}^T)^{-1}\Delta_{\Xi_Q}, ~\text{if}~Q < K,
             \end{cases}
\end{align}
that is:
\begin{equation}
    N_{Ax} = \Lambda_{\Xi_Q}^{+} \Delta_{\Xi_Q}.
\end{equation}
We note that necessarily, all components of $\Lambda_{\Xi_Q}^{+} \Delta_{\Xi_Q}$ are positive.

By injecting this previous expression in $\mathcal{L}(x)$, we get:
\begin{align}
    \mathcal{L}(x) &= \mathcal{L}(0) - \norm{N_{Ax}}^2,\\
    &= \mathcal{L}(0) - \norm{\Lambda_{\Xi_Q}^{+} \Delta_{\Xi_Q}}^2.
\end{align}

Reciprocally, introducing:
\begin{align}
    \Gamma = \{&\Xi \subset \{1, \dots, M\}~|~\forall 1\leq i \leq K, \left[\Lambda_{\Xi}^{+} \Delta_{\Xi}\right]_i \geq 0\},
\end{align}
we verify that, if $\Gamma = \emptyset$, the global minimizer of $\mathcal{L}$ is 0, otherwise, the global minimizers are the $x \in \mathrm{span} (e_i)_{i\in \Xi}$ satisfying $N_{Ax} = \Lambda_{\Xi}^{+} \Delta_{\Xi}$ where $\Xi \in \underset{\tilde{\Xi}\in \Gamma}{\mathrm{arg max}} \norm{\Lambda_{\tilde{\Xi}}^{+} \Delta_{\tilde{\Xi}}}$.
\end{proof}

For $K=1$, we check that we recover Prop.~\ref{prop:power_spectrum}.

\subsection{Proofs of Sect.~\ref{sec:beyond}}

\subsubsection{Proof of \texorpdfstring{Eq.~\eqref{eq:loss_expanded}}{Eq. (14)}}
\label{app:loss_expanded_proof}

\begin{manualprop}{4.1}
For $\phi$ a twice differentiable function, and $\epsilon_0$ arbitrarily distributed with $\Eb{\epsilon_0} = 0$, we have:
\begin{equation}
    \mathcal{L}(x, \alpha) = \norm{\phi(x) - \phi(y)}^2 + \alpha^2\left[\langle J_\phi(x)^TJ_\phi(x), \Sigma \rangle_{\rm F} + \langle H_\phi(x), \Sigma \rangle_{\rm F}\cdot\left(\phi(x) - \phi(y)\right)\right] + o(\alpha^2),
\end{equation}
where $J_\phi(x)$ is the Jacobian matrix of $\phi$ (of size $K\times M$), $H_\phi(x)$ is its Hessian tensor (of rank 3, and size $K\times M \times M$), and $\Sigma$ is the covariance matrix of $\epsilon_0$.
\end{manualprop}
\begin{proof}
We introduce $\alpha \in \mathbb{R}$, and assume that $\phi$ is twice differentiable, and that $\epsilon_0$ is arbitrarily distributed with $\Eb{\epsilon_0} = 0$\footnote{Note that we can always redefine $x_0 \leftarrow x_0 + \Eb{\epsilon_0}$ for this to be the case.}. In this context, we expand $\phi(x+\alpha\epsilon)$ as a function of $t$ at second order as follows:
\begin{equation}
\label{eq:phi_taylor}
    \phi(x + \alpha\epsilon) = \phi(x) + \alpha J_\phi(x) \epsilon + \frac1{2}\alpha^2{\epsilon}^TH_\phi(x)\epsilon + o(\alpha^2),
\end{equation}
where $J_\phi(x) = \left(\frac{\partial \phi_i}{\partial x_j}(x)\right)_{i, j}$ is the Jacobian matrix of $\phi$ (of size $K\times M$), and ${H_\phi(x) = \left(\frac{\partial^2 \phi_i}{\partial x_j\partial x_k}(x)\right)_{i, j, k}}$ is a Hessian tensor of rank 3 and size $K\times M \times M$. The second order term must be understood as a contraction on the second and third indices of the Hessian tensor.

Let us propagate this expansion in:
\begin{align}
    \mathcal{L}(x, \alpha) &= \Eb[\epsilon \sim p(\epsilon_0)]{\norm{\phi(x + \alpha\epsilon) - \phi(y)}^2}, \\
    &= \Eb{\norm{\left(\phi(x) - \phi(y)\right) + \alpha J_\phi(x) \epsilon + \frac1{2}\alpha^2{\epsilon}^TH_\phi(x)\epsilon + o(\alpha^2)}^2}, \\
    &= \norm{\left(\phi(x) - \phi(y)\right)}^2 + \alpha^2 \Eb{\norm{J_\phi(x) \epsilon}^2} + 2\alpha\left(\phi(x) - \phi(y)\right)\cdot \Eb{J_\phi(x) \epsilon} \\
    &~~~ + \alpha^2\left(\phi(x) - \phi(y)\right)\cdot \Eb{{\epsilon}^TH_\phi(x)\epsilon} + o(\alpha^2), \\
    &= \norm{\left(\phi(x) - \phi(y)\right)}^2 + \alpha^2\left(\Eb{\norm{J_\phi(x) \epsilon}^2} + \left(\phi(x) - \phi(y)\right)\cdot \Eb{{\epsilon}^TH_\phi(x) \epsilon}\right)+  o(\alpha^2),
\end{align}
where we have used the fact that $\Eb{J_\phi(x) \epsilon} = J_\phi(x)\Eb{\epsilon} = 0$. Introducing the covariance matrix $\Sigma$ of $\epsilon$, we verify that:
\begin{align}
    &\Eb{\norm{J_\phi(x) \epsilon}^2} = \Eb{\epsilon J_\phi(x)^T J_\phi(x) \epsilon} = \langle J_\phi(x)^TJ_\phi(x), \Sigma \rangle_{\rm F}, \\
    & \Eb{{\epsilon}^TH_\phi(x) \epsilon} = \langle H_\phi(x), \Sigma \rangle_{\rm F},
\end{align}
so that:
\begin{equation}
    \mathcal{L}(x, \alpha) = \norm{\phi(x) - \phi(y)}^2 + \alpha^2\left[\langle J_\phi(x)^TJ_\phi(x), \Sigma \rangle_{\rm F} + \langle H_\phi(x), \Sigma \rangle_{\rm F}\cdot\left(\phi(x) - \phi(y)\right)\right] + o(\alpha^2).
\end{equation}
\end{proof}

\subsection{Proofs of Sect.~\ref{sec:pert_algo}}
\label{app:wph_exp_proof}

We compute the explicit expressions of the Jacobian matrix $J_\phi(x)$ and the Hessian tensor $H_\phi(x)$ for $\phi$ the operator giving the WPH statistics employed in Sect.~\ref{sec:wph_exp}. We break down these computations by first computing first and second-order derivatives for simpler $\phi$ functions, namely $\phi(x) = \psi \star x$ and $\phi(x) = |\psi \star x|$. We assume periodic boundary conditions, so that for any $v \in \mathbb{K}^M$, we formally manipulate $ \tilde{v} \in \mathbb{K}^\mathbb{Z}$ defined by $\tilde{v}[i] = v[i~\mathrm{mod}~M]$ for any $i \in \mathbb{Z}$, and the `tilde' symbol is omitted for convenience. Below, the filters $\psi$, $\psi_1$, and $\psi_2$ are assumed to be complex-valued filters. For a filter $\psi$, we define the adjoint filter $\psi^\dagger$ by $\psi^\dagger[i] = \conjb{\psi}[-i]$. The convolution operation corresponds to the periodic convolution. For $z \in \mathbb{C}^*$, we introduce $\sg(z) = z/|z|$. The notation $\langle x\rangle$ refers to the average over the components of $x$, that is $\langle x \rangle = \frac1{Q} \sum_{i=1}^Q x_i$. We denote the element-wise product of $x$ and $y$ by $xy$, or $x\odot y$ when there is ambiguity.

\paragraph{Derivatives of $\phi(x) = \psi \star x$}

Since $\phi$ is linear, we simply have:
\begin{align}
J_\phi(x) &= \psi \star \cdot, \\
H_\phi(x) &= 0,
\end{align}
that is:
\begin{align}
    \frac{\partial}{\partial x_j}\left(\psi \star x [i]\right) &= \psi[i - j], \\
    \frac{\partial^2}{\partial x_j\partial x_k}\left(\psi \star x [i]\right) &= 0.
\end{align}

\paragraph{Derivatives of $\phi(x) = |\psi \star x|$} Where $\phi(x)$  is nonzero, we have:
\begin{align}
    \frac{\partial}{\partial x_j}\left(|\psi\star x|[i]\right) &= \frac{\partial}{\partial x_j} \sqrt{|\psi\star x|^2[i]}, \\
    &= \frac1{|\psi\star x|[i]}\left(\Re(\psi\star x)[i]\Re(\psi)[i-j] + \Im(\psi\star x)[i]\Im(\psi)[i-j] \right), \\
    &= \frac1{|\psi\star x|[i]}\Re\left(\psi\star x[i]\conjb{\psi}[i-j]\right), \\
    &= \Re\left[\sg(\psi\star x)[i]\conjb{\psi}[i-j]\right],
\end{align}
which leads to:
\begin{equation}
    J_\phi(x) = \Re\left[(\sg \psi\star x) \odot (\conjb{\psi}\star \cdot)\right].
\end{equation}
The computation of $H_\phi(x)$ now demands to derive $x \rightarrow \sg (\psi\star x)$:
\begin{align}
     \frac{\partial}{\partial x_j}\left(\sg{\left(\psi\star x\right)} [i]\right) &= \frac{\partial}{\partial x_j}\psi\star x[i]\times\frac1{|\psi\star x|[i]} - \frac{\psi\star x[i]}{|\psi\star x|^2[i]}\frac{\partial}{\partial x_j}|\psi\star x|[i], \\
     &= \frac{\psi[i - j]}{|\psi\star x|[i]} - \frac{\psi\star x[i]}{|\psi\star x|^2[i]}\Re\left(\sg(\psi\star x)[i]\conjb{\psi}[i-j]\right), \\
     &= \frac1{2|\psi\star x|[i]}\left[\psi[i - j] - \frac{(\psi\star x)^2}{|\psi\star x|^2}[i]\conjb{\psi}[i-j]\right], \\
     &= \frac1{2|\psi\star x|[i]}\left[\psi[i - j] - \sg(\psi\star x)^2[i]\conjb{\psi}[i-j]\right].
\end{align}
Therefore:
\begin{align}
    \frac{\partial^2}{\partial x_j\partial x_k}\left(|\psi\star x|[i]\right) &= \Re\left[\frac{\partial}{\partial x_k}\left(\sg(\psi\star x)[i]\right)\conjb{\psi}[i-j]\right], \\
    &= \frac1{2|\psi\star x|[i]}\Re\left[\psi[i - k]\conjb{\psi}[i-j] - \sg(\psi\star x)^2[i]\conjb{\psi}[i-k]\conjb{\psi}[i-j]\right].
\end{align}

\paragraph{Derivatives of $\phi(x) = \langle  |\psi \star x|^2\rangle$} Using the above, we get:
\begin{align}
    \frac{\partial}{\partial x_j}\langle |\psi \star x|^2 \rangle &= \langle \frac{\partial}{\partial x_j}\left(\psi \star x\right)\times \conjb{\psi \star x}\rangle + \conjb{\langle \frac{\partial}{\partial x_j}\left(\psi \star x\right)\times \conjb{\psi \star x}\rangle}, \\
    &= \frac{2}{M}\Re\left(\sum_{i=1}^M\psi[i-j]\times\conjb{\psi \star x}[i]\right), \\
    &= \frac{2}{M}\Re\left(\psi^\dagger \star \psi\star x\right)[j],
\end{align}
and:
\begin{equation}
    \frac{\partial^2}{\partial x_j\partial x_k}\langle |\psi \star x|^2 \rangle = \frac{2}{M}\Re\left(\psi^\dagger \star \psi \right)[j -k].
\end{equation}

\paragraph{Derivatives of $\phi(x) = \langle  |\psi \star x|\rangle^2$} We get:
\begin{align}
    \frac{\partial}{\partial x_j}\langle |\psi \star x| \rangle^2 &= 2\langle |\psi \star x| \rangle\times \frac1{M}\sum_{i=1}^M\Re\left[\sg(\psi\star x)[i]\conjb{\psi}[i-j]\right], \\
    &= \frac{2}{M}\langle |\psi \star x| \rangle\Re\left[\psi^\dagger \star \sg(\psi\star x) \right] [j],
\end{align}
and:
\begin{align}
    \frac{\partial^2}{\partial x_j\partial x_k}\langle |\psi \star x| \rangle^2 &= 2 \frac{\partial}{\partial x_j}\langle |\psi \star x| \rangle  \frac{\partial}{\partial x_k}\langle |\psi \star x| \rangle + 2 \langle |\psi \star x| \rangle \frac{\partial^2}{\partial x_j\partial x_k}\langle |\psi \star x| \rangle, \\
    &= \frac{2}{M^2}\Re\left[\psi^\dagger \star \sg(\psi\star x) \right] [j]\times \Re\left[\psi^\dagger \star \sg(\psi\star x) \right] [k] \\
    &+ \frac1{M}\langle |\psi \star x| \rangle\sum_{i=1}^M\Re\left[\frac{\psi^\dagger[j-i]\conjb{\psi}^\dagger[k-i]}{|\psi\star x|[i]} - \frac{\psi^\dagger[j-i]\sg(\psi\star x)^2[i]\psi^\dagger[k-i]}{|\psi\star x|[i]}\right], \\
    &= \frac{2}{M^2}\Re\left[\psi^\dagger \star \sg(\psi\star x) \right] [j]\times \Re\left[\psi^\dagger \star \sg(\psi\star x) \right] [k] \\
    &+ \frac1{M}\langle |\psi \star x| \rangle\times\Re\left[\psi^\dagger\otimes \frac{1}{|\psi\star x|}\otimes \conjb{\psi}^\dagger [j, k] - \psi^\dagger\otimes \frac{\sg(\psi\star x)^2}{|\psi\star x|}\otimes \psi^\dagger [j, k]\right],
\end{align}
where we have introduced the notation $a\otimes b \otimes c~[j, k] = \sum_{i=1}^Ma[j-i]b[i]c[k-i]$. Conveniently, note that $a\otimes b \otimes c~[j, j] = ac \star b [j]$.

\paragraph{Derivatives of $\phi(x) = \langle  |\psi_1 \star x|\times \conjb{\psi_2 \star x}\rangle$} We get:
\begin{align}
    \frac{\partial}{\partial x_j}\langle  |\psi_1 \star x| \conjb{\psi_2 \star x}\rangle &= \langle  \frac{\partial}{\partial x_j}\left(|\psi_1 \star x|\right)\times \conjb{\psi_2} \star x\rangle + \langle |\psi_1 \star x|\times \frac{\partial}{\partial x_j}\left(\conjb{\psi_2} \star x\right)\rangle, \\
    &= \frac1{M} \sum_{i=1}^M \left[\Re\left[\sg(\psi_1\star x)[i]\conjb{\psi_1}[i-j]\right]\conjb{\psi_2}\star x[i] + |\psi_1\star x|[i]\conjb{\psi_2}[i - j]\right], \\
    &= \frac1{2M} \sum_{i=1}^M \left[ \sg(\psi_1\star x)[i]\conjb{\psi_1}[i-j]\conjb{\psi_2}\star x[i] + \conjb{\sg(\psi_1\star x)}[i]\psi_1[i-j]\conjb{\psi_2}\star x[i] \right] \\
    &~~+ \frac1{M}\psi_2^\dagger \star |\psi_1\star x|[j], \\
    &= \frac1{2M}\left[\psi_1^\dagger \star \left(\sg(\psi_1\star x)\times \conjb{\psi_2}\star x\right) + \conjb{\psi_1^\dagger \star \left(\sg(\psi_1\star x)\times \psi_2\star x\right)}\right][j] \\
    &~~+ \frac1{M}\psi_2^\dagger \star |\psi_1\star x|[j].
\end{align}
We break down the calculation of the second-order derivatives of $\phi$ by first calculating:
\begin{align}
    \frac{\partial}{\partial x_k}\left(\psi_2^\dagger \star |\psi_1\star x|[j]\right) &= \sum_{i=1}^M\psi_2^\dagger[j - i]\Re\left[\sg(\psi_1\star x)[i]\conjb{\psi_1}[i-k]\right], \\
    &= \frac1{2}\sum_{i=1}^M\left[\psi_1^\dagger[k - i]\sg(\psi_1\star x)[i]\psi_2^\dagger[j - i] \right.\\
    &~~~~~~~~~~~~~~\left. + \conjb{\psi}_1^\dagger[k - i]\conjb{\sg(\psi_1\star x)}[i]\psi_2^\dagger[j - i]\right], \\
    &= \frac1{2}\left[\psi_2^\dagger\otimes\sg(\psi_1\star x)\otimes \psi_1^\dagger +  \psi_2^\dagger\otimes\conjb{\sg(\psi_1\star x)}\otimes \conjb{\psi_1}^\dagger\right][j, k].
\end{align}

We also have:
\begin{align}
\frac{\partial}{\partial x_k}\left(\sg(\psi_1\star x)\times \psi_2\star x\right)[i] &= \frac{\psi_2\star x[i]}{2|\psi_1\star x|[i]}\left[\conjb{\psi_1}^\dagger[k - i] - \sg(\psi_1\star x)^2[i]\psi_1^\dagger[k-i]\right] \\
&~~ + \sg(\psi_1\star x)[i]\conjb{\psi_2}^\dagger[k-i], \\
\end{align}
so that:
\begin{align}
\frac{\partial}{\partial x_k}\psi_1^\dagger \star \left(\sg(\psi_1\star x)\times \psi_2\star x\right)[j] &= \frac1{2}\sum_{i=1}^M \psi_1^\dagger[j-i] \frac{\psi_2\star x[i]}{|\psi_1\star x|[i]}\conjb{\psi_1}^\dagger[k - i] \\
&- \frac1{2}\sum_{i=1}^M \psi_1^\dagger[j-i]\frac{\psi_2\star x\times\sg(\psi_1\star x)^2[i]}{|\psi_1\star x|[i]}\psi_1^\dagger[k - i]\\
&+ \sum_{i=1}^M\psi_1^\dagger[j-i]\sg(\psi_1\star x)[i]\conjb{\psi_2}^\dagger[k-i], \\
&= \frac1{2}\left[\psi_1^\dagger \otimes \frac{\psi_2\star x}{|\psi_1\star x|}\otimes \conjb{\psi_1}^\dagger + \psi_1^\dagger \otimes \frac{\psi_2\star x\times\sg(\psi_1\star x)^2}{|\psi_1\star x|}\otimes \psi_1^\dagger \right.\\
& \left. + 2\psi_1^\dagger \otimes \sg(\psi_1\star x)\otimes \conjb{\psi_2}^\dagger\right][j, k].
\end{align}
Therefore:
\begin{align}
    \frac{\partial^2}{\partial x_j\partial x_k}\langle  |\psi_1 \star x| \conjb{\psi_2 \star x}\rangle &= \frac1{4M}\left[\psi_1^\dagger \otimes \frac{\conjb{\psi_2}\star x}{|\psi_1\star x|}\otimes \conjb{\psi_1}^\dagger + \conjb{\psi_1}^\dagger \otimes \frac{\conjb{\psi_2}\star x}{|\psi_1\star x|}\otimes \psi_1^\dagger \right. \\
    & - \psi_1^\dagger \otimes \frac{\conjb{\psi_2}\star x\times\sg(\psi_1\star x)^2}{|\psi_1\star x|}\otimes \psi_1^\dagger - \conjb{\psi_1^\dagger \otimes \frac{\psi_2\star x\times\sg(\psi_1\star x)^2}{|\psi_1\star x|}\otimes \psi_1^\dagger} \\
    & + 2\psi_1^\dagger \otimes \sg(\psi_1\star x)\otimes \psi_2^\dagger + 2\conjb{\psi_1}^\dagger \otimes \conjb{\sg(\psi_1\star x)}\otimes \psi_2^\dagger \\
    & \left. + 2\psi_2^\dagger\otimes\sg(\psi_1\star x)\otimes \psi_1^\dagger +  2\psi_2^\dagger\otimes\conjb{\sg(\psi_1\star x)}\otimes \conjb{\psi_1}^\dagger\right][j, k].
\end{align}

\paragraph{Summary for $\psi^\dagger = \psi$ and $\Sigma = \mathrm{diag}(\sigma^2)$} For $\phi(x) \in \mathbb{C}$, we only focus on simplified expressions of:
\begin{align}
    \langle J_\phi^\dagger(x) J_\phi(x), \Sigma\rangle_\mathrm{F} &= \sum_{i=1}^M |\frac{\partial \phi}{\partial x_i}(x)|^2\sigma^2_i, \\
    \langle H_\phi(x), \Sigma\rangle_\mathrm{F} &= \sum_{i=1}^M\frac{\partial^2 \phi}{\partial x_i^2}(x)\sigma_i^2.
\end{align}
We get for $\langle J_\phi^\dagger(x) J_\phi(x), \Sigma\rangle_\mathrm{F}$:
\begin{align}
    &\phi(x) = \hat{S}^{11}(x) &\rightarrow & \frac{4}{M^2}\sum_{i=1}^M\Re\left(\psi \star \psi\star x\right)^2[i]\times \sigma^2[i] \\
    &\phi(x) = \hat{S}^{00}(x) &\rightarrow & \frac{4}{M^2}\sum_{i=1}^M\Re\left[\psi \star \psi\star x - \langle |\psi \star x| \rangle\times \psi \star \sg(\psi\star x) \right]^2[i]\times \sigma^2[i]\\
    &\phi(x) = \hat{S}^{01}(x) &\rightarrow & \frac{1}{4M^2}\sum_{i=1}^M \left| 3 \psi \star |\psi \star x| + \conjb{\psi \star (\sg(\psi\star x) \times \psi\star x)}\right|^2[i]\times\sigma^2[i]\\
    &\phi(x) = \hat{C}^{01}(x) &\rightarrow &\frac1{4M^2}\sum_{i=1}^M\left| \psi_1 \star \left(\sg(\psi_1\star x)\times \conjb{\psi_2}\star x\right) + \conjb{\psi_1 \star \left(\sg(\psi_1\star x)\times \psi_2\star x\right)} \right. \\
    &&& ~~~~~~~~~~~~~~ \left. +  2 \psi_2 \star |\psi_1\star x| \right| ^2 [i]\times \sigma^2[i].
\end{align}
And we get for $\langle H_\phi(x), \Sigma\rangle_\mathrm{F}$:
\begin{align}
    &\phi(x) = \hat{S}^{11}(x) &\rightarrow & \frac{2}{M}\Re\left(\psi \star \psi\right)[0]\sum_{i=1}^M \sigma^2[i] \\\
    &\phi(x) = \hat{S}^{00}(x) &\rightarrow &\frac{1}{M}\sum_{i=1}^M\Re\left[2\psi \star \psi [0] - \frac{2}{M} \left[\Re\left(\psi \star \sg(\psi\star x)\right)\right]^2 \right.\\
    &&& ~~~~~~~~~~ \left. +\langle |\psi \star x |\rangle\left(\psi^2 \star \frac{\left(\sg(\psi\star x)\right)^2}{|\psi\star x|} -|\psi|^2\star \frac1{|\psi\star x|}\right)\right][i]\times \sigma^2[i] \\
    &\phi(x) = \hat{S}^{01}(x) &\rightarrow & \frac1{4M}\sum_{i=1}^M\left[6|\psi|^2\star \conjb{\sg(\psi\star x)} + 3\psi^2 \star \sg(\psi\star x) \right.\\
    &&& ~~~~~~~~~~ \left. - \conjb{\psi^2 \star \left(\sg(\psi\star x)^3\right)}\right][i]\times\sigma^2[i] \\
    &\phi(x) = \hat{C}^{01}(x) &\rightarrow &  \frac1{4M}\sum_{i=1}^M\left[2|\psi_1|^2\star \frac{\conjb{\psi_2 \star x}}{|\psi_1\star x|} + 4\psi_1\psi_2\star\sg(\psi_1\star x) + 4\conjb{\psi_1\conjb{\psi_2} \star \sg(\psi_1\star x)} \right. \\
    &&& \left. -\psi_1^2\star \left(\frac{\conjb{\psi_2\star x}}{|\psi_1\star x|}\times \sg(\psi_1\star x)^2\right) - \conjb{\psi_1^2\star \left(\frac{\psi_2\star x}{|\psi_1\star x|}\times \sg(\psi_1\star x)^2\right)}\right][i]\times\sigma^2[i].
\end{align}

\section{Relation to Maximum Likelihood Estimation}
\label{app:mle}

We investigate the links between statistical component separation methods and maximum likelihood estimation (MLE).

\paragraph{Estimation of $x_0$.} In a denoising context, the goal is to recover $x_0$ given $y$. A naive application of MLE would be to maximize the likelihood $p(y\,|\,x_0)$. Assuming a Gaussian noise with covariance $\Sigma$, we have $p(y\,|\,x_0) \propto \exp\left(-\frac1{2}\norm{y -x_0}^2_{\Sigma}\right)$ with $\norm{y -x_0}^2_{\Sigma} = \left(y - x_0\right)^T\Sigma^{-1}\left(y-x_0\right)$, which leads to the trivial estimate $\hat{x}_0 = y$. Since in this setting, naive MLE can only bring us back to the noisy data, one typically confuses MLE with the estimation of a maximum a posteriori (MAP), which requires the definition of a prior distribution $p(x)$ over the target signal~\citep[see e.g.][]{Elad2023}. With $p(x) \propto \exp\left(-\lambda \rho(x)\right)$, still in the case of a Gaussian noise, a MAP estimate $\hat{x}_0$ of $x_0$ reads:
\begin{equation}
    \hat{x}_0 \in \underset{x}{\textrm{arg min}}~\left[\frac1{2}\norm{y -x}^2_{\Sigma} + \lambda \rho(x)\right].
\end{equation}
This minimization problem aims to strike a balance between two constraints: one enforcing proximity to the noisy data $y$, and the other imposing a prior constraint, typically in the form of a regularity constraint over $x$. The parameter $\lambda$ controls the relative weight of these two constraints. A statistical component separation method can be remotely related to MLE in this picture. The estimate $\hat{x}_0$ as defined in Eq.~\eqref{eq:loss_base} can be interpreted as an MLE estimate by choosing $\rho(x) = \mathcal{L}(x) = \Eb[\epsilon \sim p(\epsilon_0)]{\norm{\phi(x + \epsilon) - \phi(y)}^2_2}$ and picking $\lambda \rightarrow +\infty$. Moreover, by initializing the optimization of $\mathcal{L}(x)$ with $y$, an implicit notion of proximity to $y$ is implied.

\paragraph{Estimation of $\phi(x_0)$.} When focusing on the estimation of $\phi(x_0)$ given $y$, which is the ultimate goal of a statistical component separation method, MLE methods require an explicit model connecting $\phi(x_0)$ to $y$. The approach taken in this paper share similarities with the sampling process of maximum entropy models as defined in \cite{Bruna2019}. For the purpose of this discussion, we model $x_0$ as a realization of a macrocanonical model with density $p(x) = \mathcal{Z}^{-1}\exp\left(-\theta_{\phi(x_0)}\cdot \phi(x)\right)$, where $\theta_{\phi(x_0)}$ is a vector of parameters fixed so that $\Eb[x\sim p(x)]{\phi(x)} = \phi(x_0)$. In this context, assuming a Gaussian noise with covariance $\Sigma$, the likelihood $p(y\,|\,\phi(x_0))$ can be written as:
\begin{equation}
    p(y\,|\,\phi(x_0)) \propto \int \exp\left(-\theta_{\phi(x_0)}\cdot \phi(x^\prime) - \frac1{2} \norm{y -x^\prime}^2_{\Sigma}\right) dx^\prime.
\end{equation}
This likelihood remains generally intractable, which prevents further connections to the statistical component separation method presented in this paper.

\section{Complementary Details on the Data}
\label{app:data_details}

We give further details on the nature of the data used for the experiments in Sect.~\ref{sec:exp} and \ref{sec:beyond}.

\paragraph{The dust image} This image corresponds to a simulated intensity map of the emission of dust grains in the interstellar medium. It was taken from the work of \cite{Regaldo2023}, and we refer to this paper for further details on the way it was generated.

\paragraph{The LSS image} This image was built from a cosmological simulation taken from the \texttt{Quijote} suite~\citep{Villaescusa2020}. This simulation describes the evolution of fluid of dark matter in a 1~$(\mathrm{Gpc}/h)^3$ periodic box for a $\Lambda$CDM cosmology parameterized by $(\Omega_m, \Omega_b, h, n_s, \sigma_8) = (0.3223, 0.04625, 0.7015, 0.9607, 0.8311)$ (high-resolution LH simulation). We take a snapshot at $z= 0$ of the dark matter density field at $z=0$ and project a $500\times 500\times 10~(\mathrm{Mpc}/h)^3$ slice along the thinnest dimension. Our test image is the logarithm of this projection.

\paragraph{The ImageNet image} We have randomly picked a RGB image of the ImageNet dataset~\citep{Deng2009} from the ``digital clock'' class. For the experiments of Sect.~\ref{sec:wph_exp}, to simplify, we have first preprocessed the image by averaging it over the channel dimension.

\section{Definition and Computation of the WPH statistics}
\label{app:wph}

We use the same bump-steerable wavelets as in \cite{Regaldo2021} with $J = 7$ and $L = 4$, which leads to a bank of $J\times L = 28$ wavelets. For a given image $x$, the covariances introduced in Sect.~\ref{sec:wph_exp} are estimated as follows:
\begin{align}
    \hat{S}^{11}_{i}(x) &= \langle |x \star \psi_i|^2 \rangle, 
    &\hat{S}^{00}_{i}(x) &= \langle |x \star \psi_i|^2 \rangle - \langle |x \star \psi_i| \rangle^2, \\
    \hat{S}^{01}_{i}(x) &= \langle |x \star \psi_i|\times \conjb{x\star \psi_i} \rangle,
    &\hat{C}^{01}_{i, j}(x) &= \langle |x \star \psi_i| \times \conjb{x\star \psi_j} \rangle,
\end{align}
where $\langle \cdot \rangle$ is a spatial average operator. Moreoever, in the numerical experiments of this paper, these coefficients are systematically normalized according to the $\hat{S}^{11}$ coefficients of the noisy map $y$ as follows:
\begin{align}
    \tilde{S}^{11}_{i}(x) &= \frac{\hat{S}^{11}_{i}(x)}{\hat{S}^{11}_{i}(y)}, 
    &\tilde{S}^{00}_{i}(x) &= \frac{\hat{S}^{00}_{i}(x)}{\hat{S}^{11}_{i}(y)}, \\
    \tilde{S}^{01}_{i}(x) &= \frac{\hat{S}^{01}_{i}(x)}{\hat{S}^{11}_{i}(y)}, 
    &\tilde{C}^{01}_{i, j}(x) &= \frac{\hat{C}^{01}_{i, j}(x)}{\sqrt{\hat{S}^{11}_{i}(y)\hat{S}^{11}_{j}(y)}}.
\end{align}
As it was reported in the related literature~\citep{Zhang2021, Allys2020, Regaldo2021, Regaldo2023}, this normalization acts as a preconditioning of the loss function $\mathcal{L}$, and has then a direct impact on the optimum $\hat{x}_0$ one gets with a given optimizer. The quality of the results then a priori depends on this normalization, and it worth noticing than alternative approaches as in~\cite{Delouis2022} could have been explored.


\section{Connection with \texorpdfstring{\cite{Delouis2022}}{Delouis2022}}
\label{app:connection_to_delouis}

We draw connections between Algorithm~\ref{alg:diffusive_scs} and the algorithm introduced in~\cite{Delouis2022}, which is formally described in Algorithm~\ref{alg:delouis_scs}. These connections show that the two algorithms are very related. A further exploration of the pros and cons of each of these two algorithms is left for future work.

\begin{algorithm}
\caption{Statistical Component Separation as in~\cite{Delouis2022}}\label{alg:delouis_scs}
\begin{algorithmic}
\State \textbf{Input:} $y$, $p(\epsilon_0)$, $Q$, $T$, $P$
\State \textbf{Initialize:} $\hat{x}_0 = y$
\For{$i = 1 \dots P$}
    \State \textbf{sample} $\epsilon_1, \dots, \epsilon_Q \sim p(\epsilon_0)$
    \State $m = \sum_{k=1}^Q \phi(\hat{x}_0 + \epsilon_k) / Q$
    \State $\sigma = \left(\sum_{k=1}^Q (\phi(\hat{x}_0 + \epsilon_k) - m)^2/ Q\right)^{1/2}$
    \State $B = m - \phi(\hat{x}_0)$
    \For{$j = 1 \dots T$}
        \State $\hat{\mathcal{L}}(\hat{x}_0) =  \norm{\left[\phi(\hat{x}_0) + B - \phi(y)\right] / \sigma}^2$
        \State $\hat{x}_0 \gets \textsc{one\_step\_optim}\left[\hat{\mathcal{L}}(\hat{x}_0)\right]$
    \EndFor
\EndFor
\State \Return $\hat{x}_{0}$
\end{algorithmic}
\end{algorithm}

The following lemma highlights the similarities between the loss function $\mathcal{L}$ defined in Eq.~\eqref{eq:loss_base} and the one involved in Algorithm~\ref{alg:delouis_scs}.

\begin{lemma}
Introducing  $\sigma_\phi(x+\epsilon)^2 = \Eb{\lvert\phi(x+\epsilon)\rvert^2} - \lvert\Eb{\phi(x+\epsilon)}\rvert^2$, that is the vector of variances of each component of $\phi(x+\epsilon)$, we have:
\begin{align}
    \mathcal{L}(x) &= \Eb[\epsilon \sim p(\epsilon_0)]{\norm{\phi(x + \epsilon) - \phi(y)}^2}, \\
    &= \norm{\sigma_\phi(x+\epsilon)}^2 + \norm{\Eb{\phi(x+\epsilon)} - \phi(y)}^2. \label{eq:loss_base_reformulated}
\end{align}
\end{lemma}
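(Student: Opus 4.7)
The claim is the standard bias--variance / mean--variance decomposition applied componentwise to the random vector $Z := \phi(x+\epsilon)$, regarded as $\phi(y)$ being a deterministic target. The plan is therefore to add and subtract $\Eb{Z}$ inside the norm, expand the square, and kill the cross term by linearity of expectation.

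More precisely, I would first write
\begin{equation*}
    \phi(x+\epsilon) - \phi(y) = \bigl(\phi(x+\epsilon) - \Eb{\phi(x+\epsilon)}\bigr) + \bigl(\Eb{\phi(x+\epsilon)} - \phi(y)\bigr),
\end{equation*}
where the first summand is a zero-mean random vector and the second is deterministic. Expanding $\norm{\cdot}^2$ using $\norm{a+b}^2 = \norm{a}^2 + 2\Re(a\cdot b) + \norm{b}^2$ (with the complex inner product from the Notations paragraph), and then taking $\Eb[\epsilon\sim p(\epsilon_0)]{\cdot}$, the cross term reads $2\Re\bigl\langle \Eb{\phi(x+\epsilon) - \Eb{\phi(x+\epsilon)}},\, \Eb{\phi(x+\epsilon)} - \phi(y)\bigr\rangle$, which vanishes because the first factor is $0$.

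It then remains to identify the two surviving terms. The deterministic one is already $\norm{\Eb{\phi(x+\epsilon)} - \phi(y)}^2$. For the stochastic one, I would expand componentwise: for each coordinate $i \in \{1,\dots,K\}$,
\begin{equation*}
    \Eb{\lvert \phi_i(x+\epsilon) - \Eb{\phi_i(x+\epsilon)}\rvert^2} = \Eb{\lvert \phi_i(x+\epsilon)\rvert^2} - \lvert\Eb{\phi_i(x+\epsilon)}\rvert^2,
\end{equation*}
which is precisely $\sigma_\phi(x+\epsilon)_i^2$. Summing over $i$ gives $\norm{\sigma_\phi(x+\epsilon)}^2$ and concludes.

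There is essentially no obstacle: the only minor care is to use the complex inner product $a\cdot b = \sum_i a_i\overline{b_i}$ so that $\norm{a+b}^2 = \norm{a}^2 + 2\Re(a\cdot b) + \norm{b}^2$, ensuring the cross term vanishes correctly when one of the factors is zero in expectation, and to note that the variance identity $\Eb{|W|^2} - |\Eb{W}|^2 = \Eb{|W - \Eb{W}|^2}$ holds for complex-valued random variables as well. Everything else is direct algebra on expectations.
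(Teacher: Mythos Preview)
Your proof is correct and follows essentially the same approach as the paper: a direct bias--variance decomposition via elementary algebra on expectations. The only cosmetic difference is the order of operations---the paper expands $\norm{\phi(x+\epsilon)-\phi(y)}^2$ first and then adds and subtracts $\norm{\Eb{\phi(x+\epsilon)}}^2$ to complete the square, whereas you center by $\Eb{\phi(x+\epsilon)}$ inside the norm before expanding.
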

\pagebreak
\begin{proof}
    \begin{align}
    \mathcal{L}(x) &= \Eb[\epsilon \sim p(\epsilon_0)]{\norm{\phi(x + \epsilon) - \phi(y)}^2}, \\
    &= \Eb{\norm{\phi(x + \epsilon)}^2} + \norm{\phi(y)}^2 - 2\Eb{\Re\left(\phi(x + \epsilon)\cdot\conjb{\phi(y)}\right)}, \\
    &=\Eb{\norm{\phi(x + \epsilon)}^2}  + \norm{\phi(y)}^2 - 2\Re\left(\Eb{\phi(x + \epsilon)}\cdot\conjb{\phi(y)}\right), \\
    &=\Eb{\norm{\phi(x + \epsilon)}^2} - \norm{ \Eb{\phi(x + \epsilon)}}^2 + \norm{ \Eb{\phi(x + \epsilon)}}^2  \\
    &~~~+ \norm{\phi(y)}^2 - 2\Re\left(\Eb{\phi(x + \epsilon)}\cdot\conjb{\phi(y)}\right), \\
    &= \norm{\sigma_\phi(x+\epsilon)}^2 + \norm{\Eb{\phi(x+\epsilon)} - \phi(y)}^2.
\end{align}
\end{proof}
Thanks to this lemma, now $\mathcal{L}$ appears as the sum of two terms. The first one constrains the norm of the variance vector $\sigma_\phi(x+\epsilon)$ to be minimal, while the second one constrains the mean vector $\Eb{\phi(x+\epsilon)}$ to be close to $\phi(y)$. In the light of this new expression of $\mathcal{L}$, Algorithm~\ref{alg:delouis_scs} aims to minimize the related following loss:
\begin{equation}
\label{eq:loss_delouis}
    \tilde{\mathcal{L}}(x) = \norm{\frac{\Eb{\phi(x+\epsilon)} - \phi(y)}{\sigma_\phi(x+\epsilon)}}^2,
\end{equation}
which appears to be the second term of Eq.~\eqref{eq:loss_base_reformulated} normalized by the first term of the same equation. However, instead of minimizing directly $\tilde{\mathcal{L}}$, Algorithm~\ref{alg:diffusive_scs} makes the following approximations:
\begin{align}
    \Eb{\phi(x+\epsilon)} &\approx \phi(x) + B, \\
    \sigma_\phi(x+\epsilon) &\approx \sigma,
\end{align}
where $B$ and $\sigma$ are repectively the bias and standard deviation terms explicited in Algorithm~\ref{alg:diffusive_scs}. Same as in Algorithm~\ref{alg:diffusive_scs}, Algorithm~\ref{alg:delouis_scs} adopts a stepwise approach, where $B$ and $\sigma$ are updated at each step $i \in \{1, \dots, P\}$ using the $\hat{x}_0$ signal obtained from the previous step.

\section{Additional Results}
\label{app:additional_results}

We show in Figs.~\ref{fig:wph_quijote_images}, \ref{fig:wph_quijote_stats}, and \ref{fig:wph_quijote_stats_crosses} (respectively, \ref{fig:wph_imagenet_images}, \ref{fig:wph_imagenet_stats}, \ref{fig:wph_imagenet_stats_crosses}, and \ref{fig:wph_imagenet_random_sel_stats}) equivalent results to Figs.~\ref{fig:wph_dust_images}, \ref{fig:wph_dust_stats}, and \ref{fig:wph_dust_stats_crosses}, respectively, for the LSS (ImageNet) data. We also show in Fig.~\ref{fig:vgg_imagenet_images} equivalent results to Fig.~\ref{fig:wph_dust_images} for the ImageNet data and a ConvNet-based representation.

\begin{figure}
    \includegraphics[width=\textwidth]{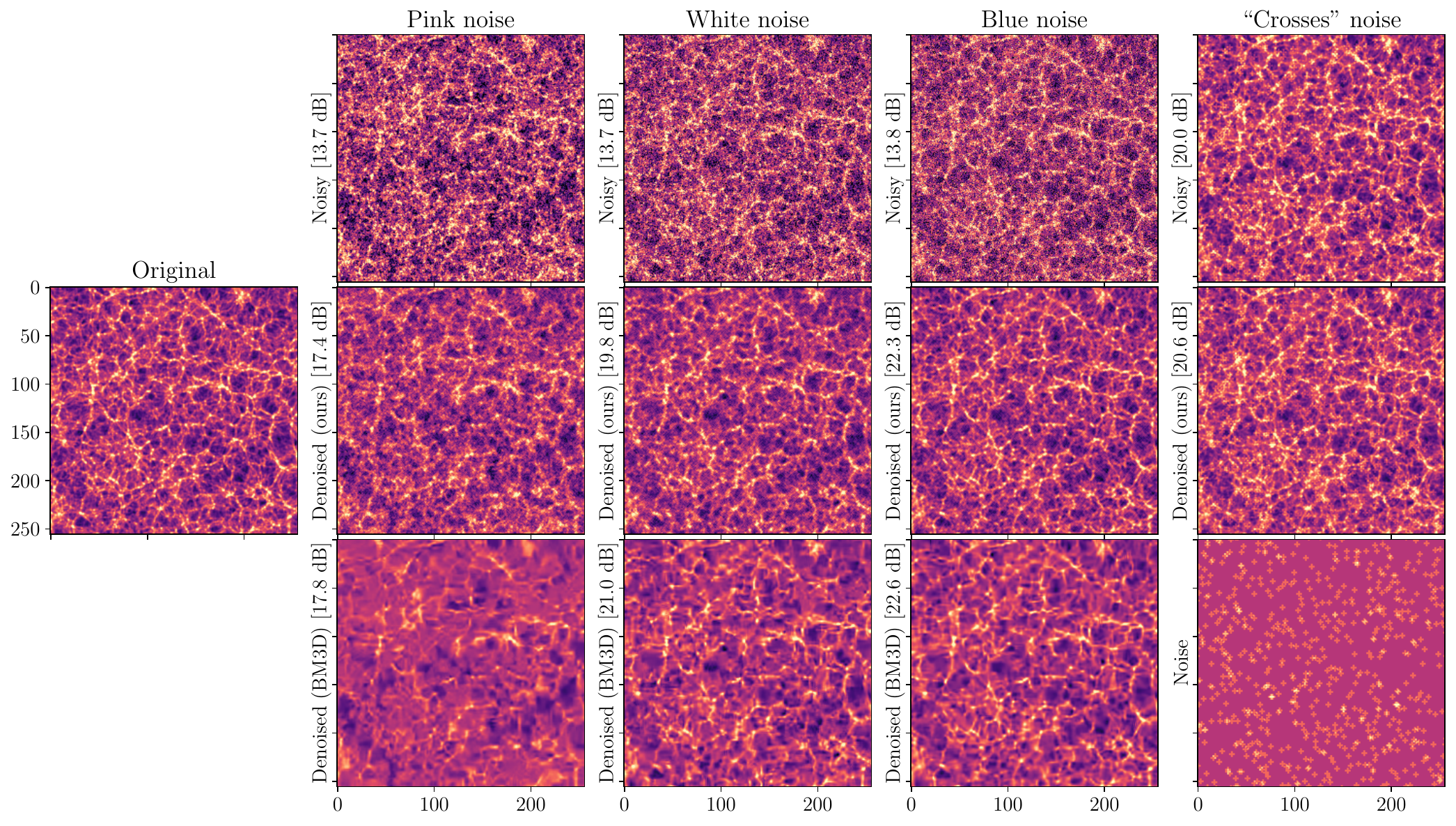}
    \caption{Same as Fig.~\ref{fig:wph_dust_images} for the LSS image.}
    \label{fig:wph_quijote_images}
\end{figure}

\begin{figure}
    \includegraphics[width=\textwidth]{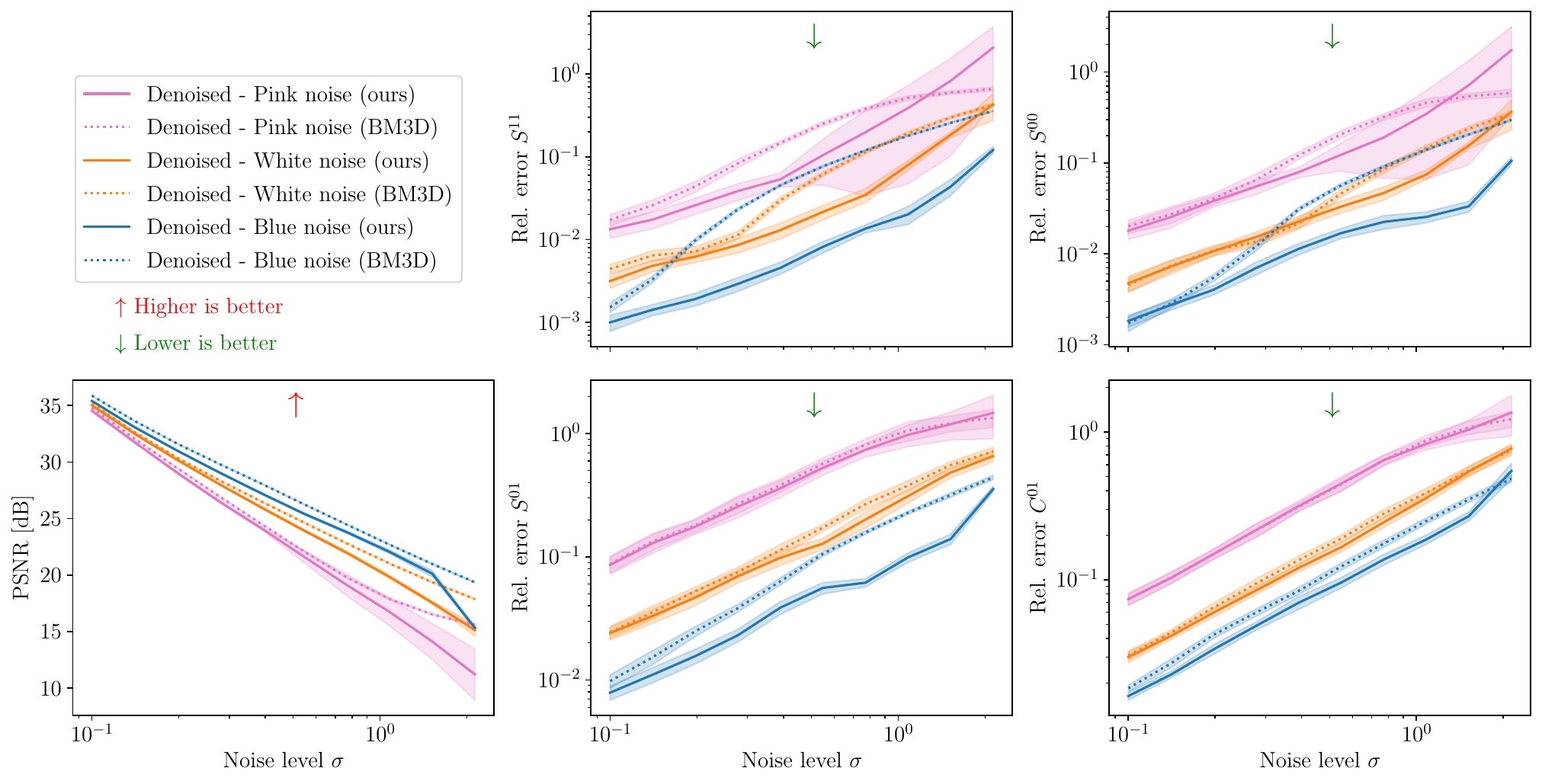}
    \caption{Same as Fig.~\ref{fig:wph_dust_stats} for the LSS image.}
    \label{fig:wph_quijote_stats}
\end{figure}

\begin{figure}
    \includegraphics[width=\textwidth]{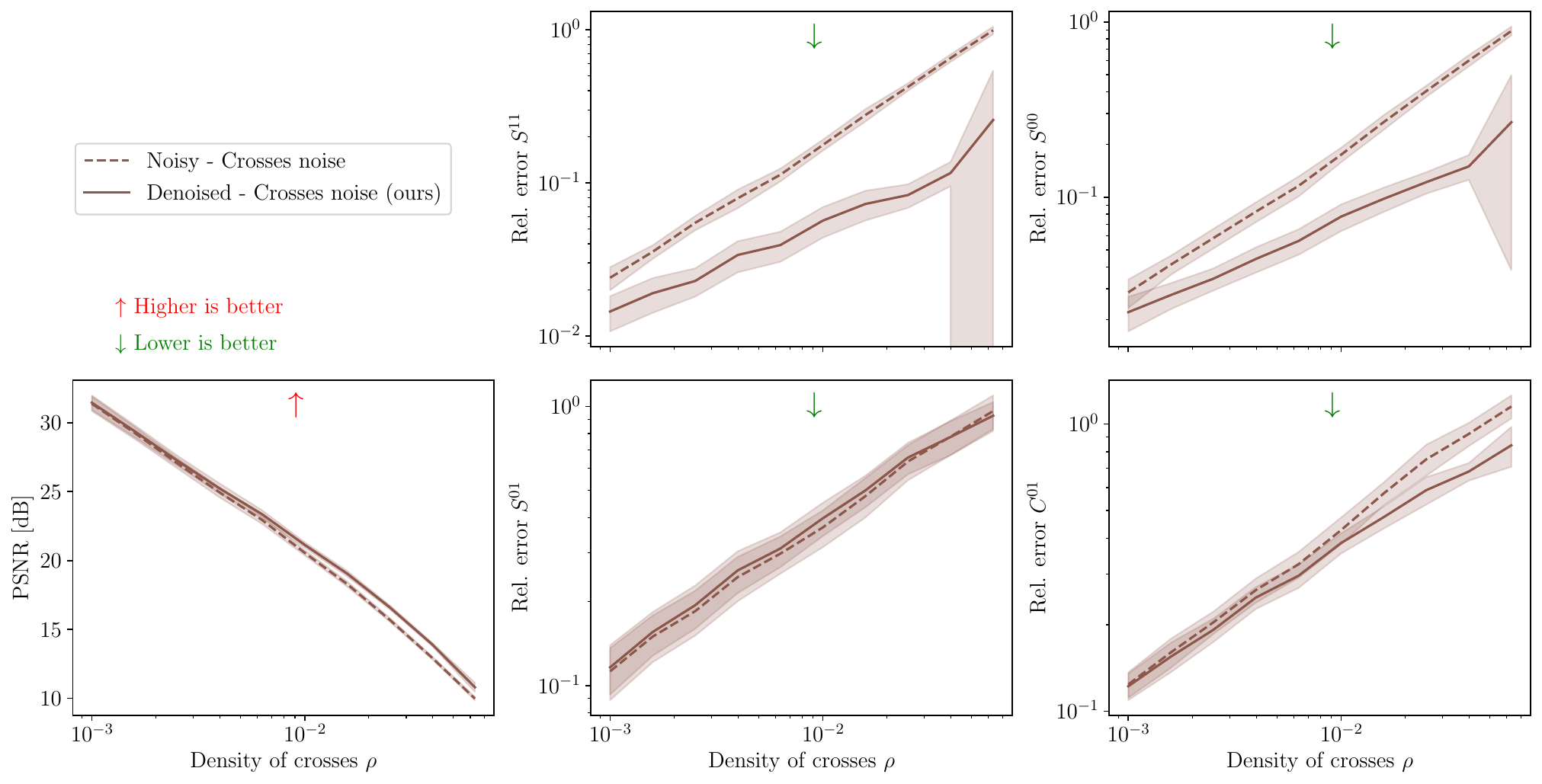}
    \caption{Same as Fig.~\ref{fig:wph_dust_stats_crosses} for the LSS image.}
    \label{fig:wph_quijote_stats_crosses}
\end{figure}

\begin{figure}
    \includegraphics[width=\textwidth]{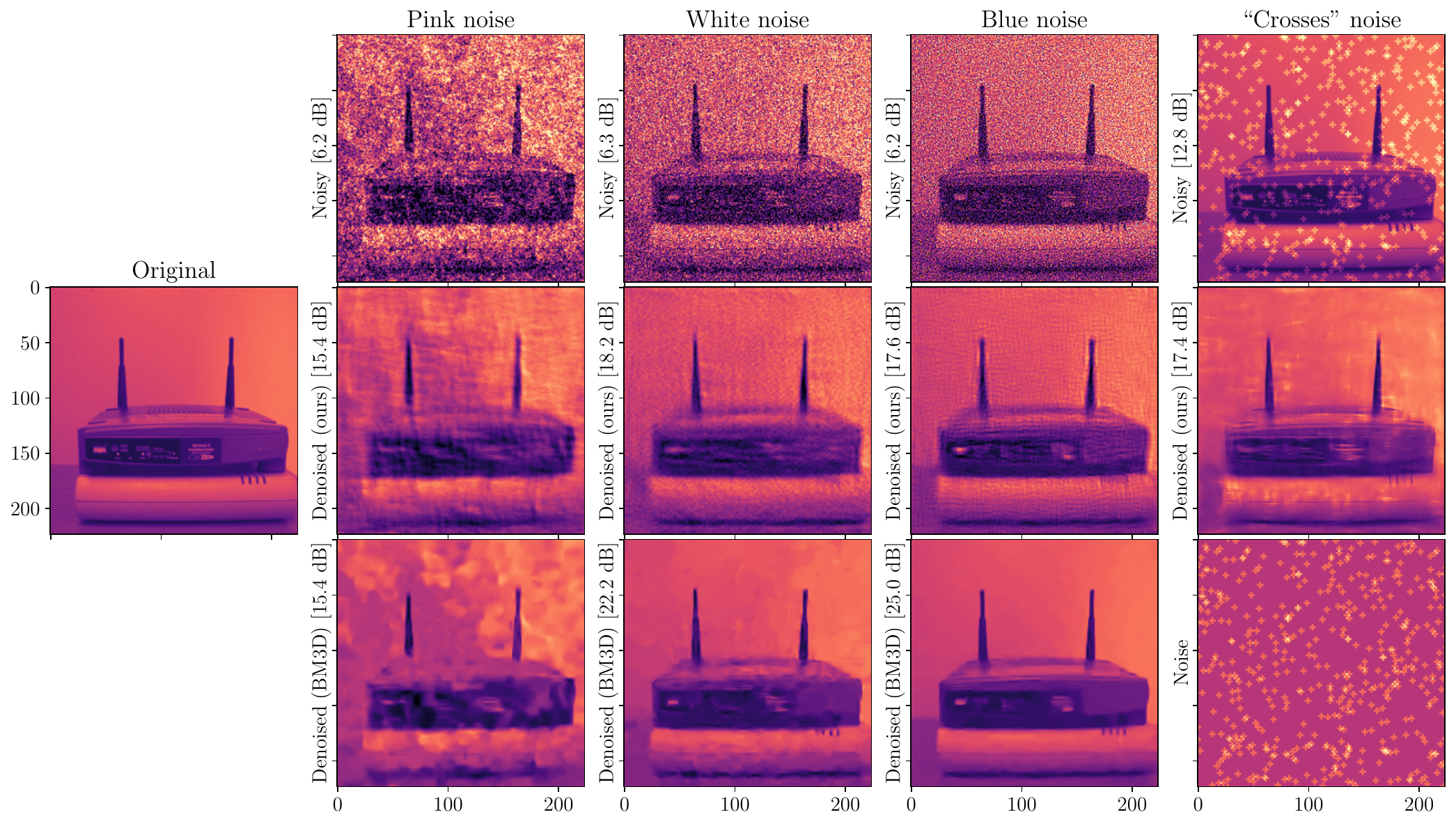}
    \caption{Same as Fig.~\ref{fig:wph_dust_images} for an ImageNet image.}
    \label{fig:wph_imagenet_images}
\end{figure}

\begin{figure}
    \includegraphics[width=\textwidth]{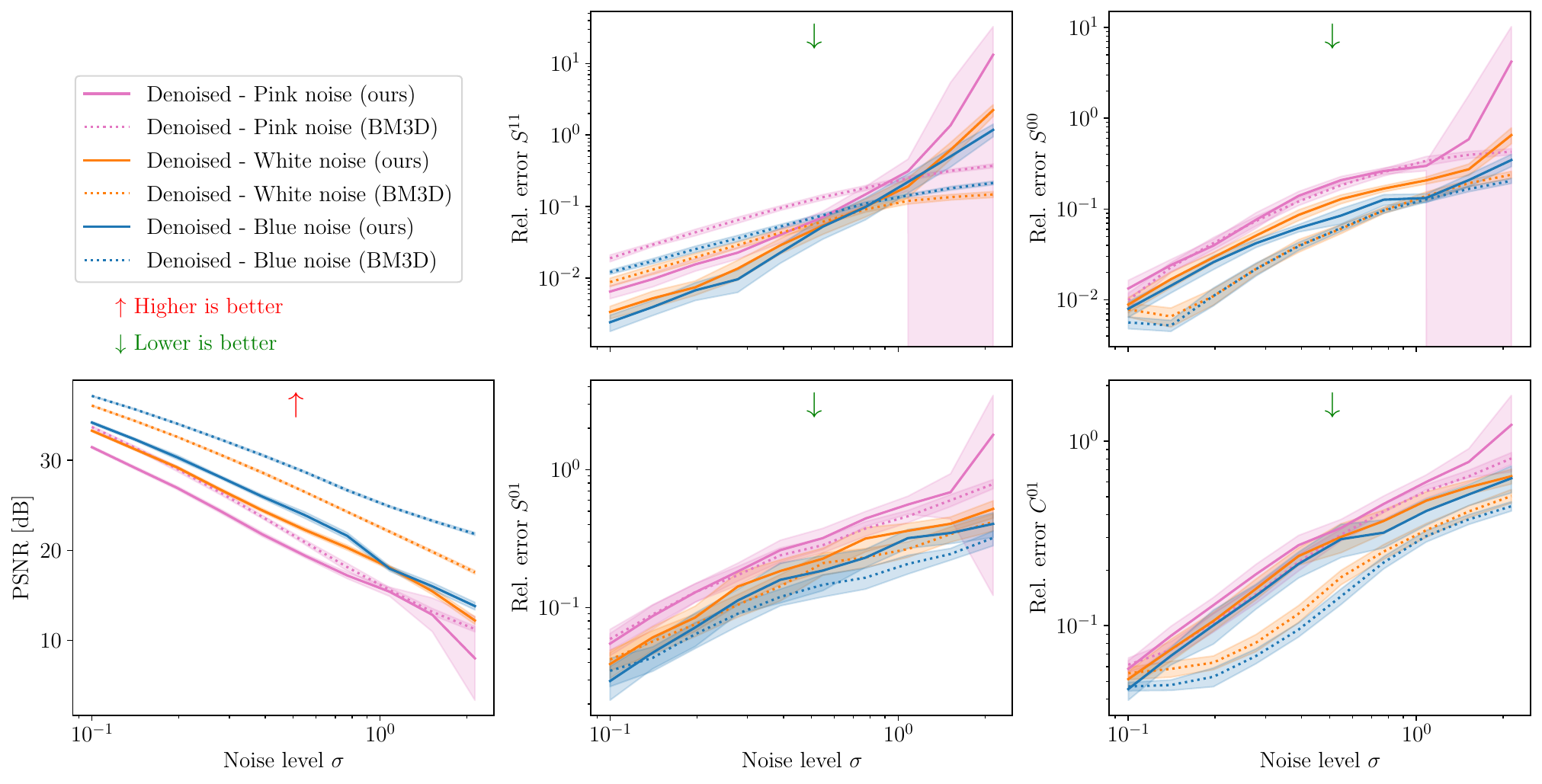}
    \caption{Same as Fig.~\ref{fig:wph_dust_stats} for the ImageNet image shown in Fig.~\ref{fig:wph_imagenet_images}.}
    \label{fig:wph_imagenet_stats}
\end{figure}

\begin{figure}
    \includegraphics[width=\textwidth]{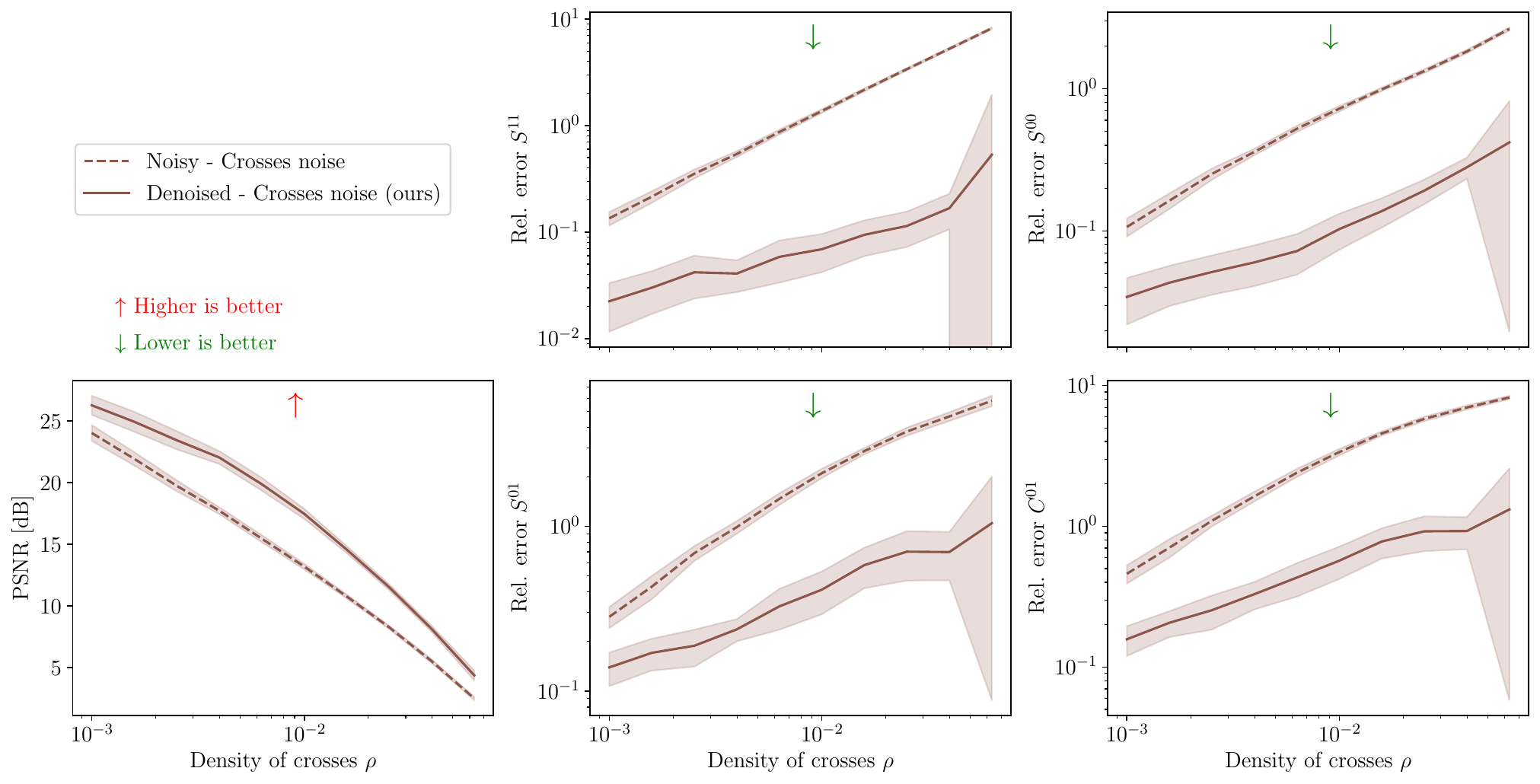}
    \caption{Same as Fig.~\ref{fig:wph_dust_stats_crosses} for the ImageNet image.}
    \label{fig:wph_imagenet_stats_crosses}
\end{figure}

\begin{figure}
    \centering
    \includegraphics[width=0.85\textwidth]{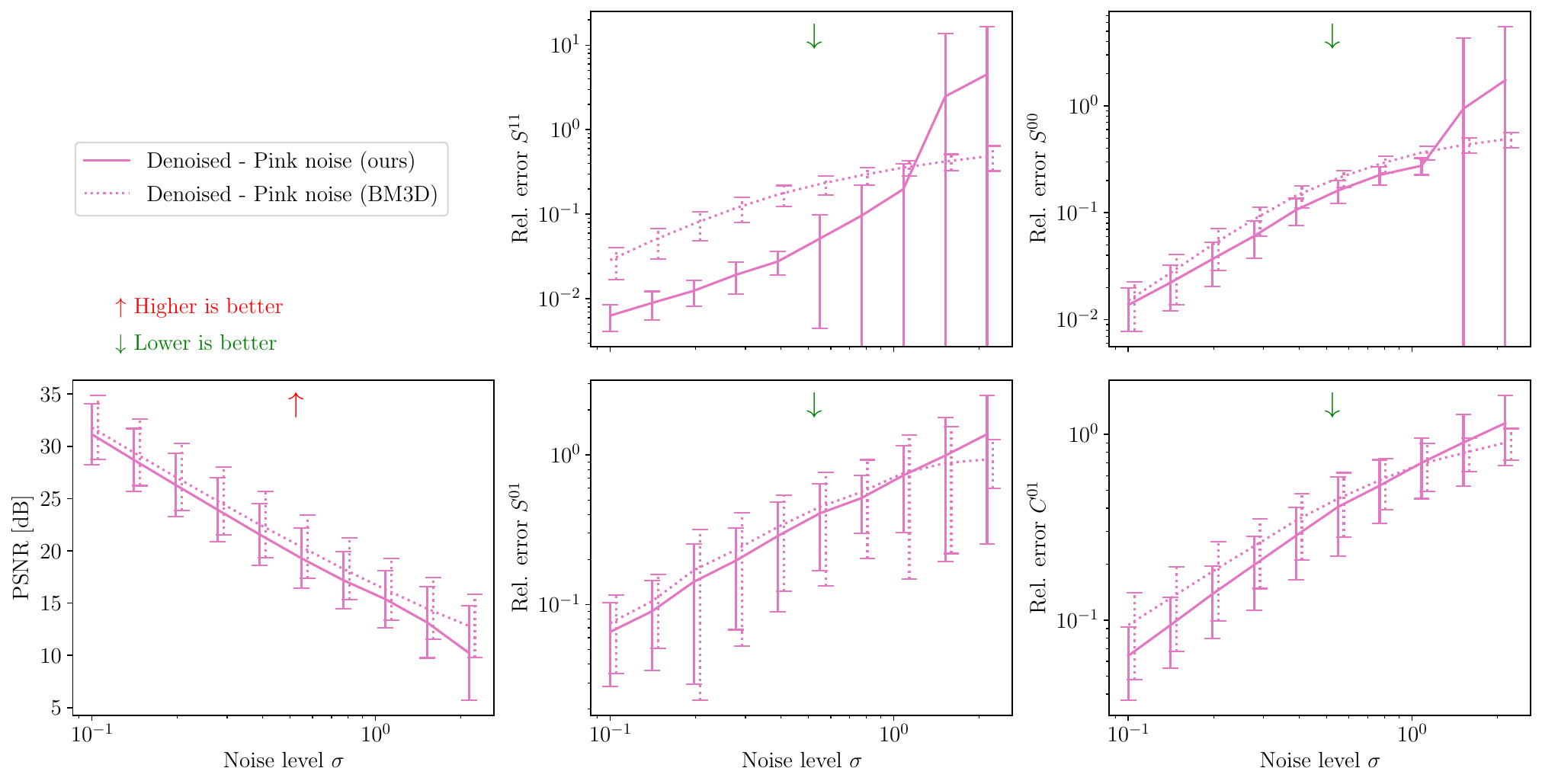}
    \includegraphics[width=0.85\textwidth]{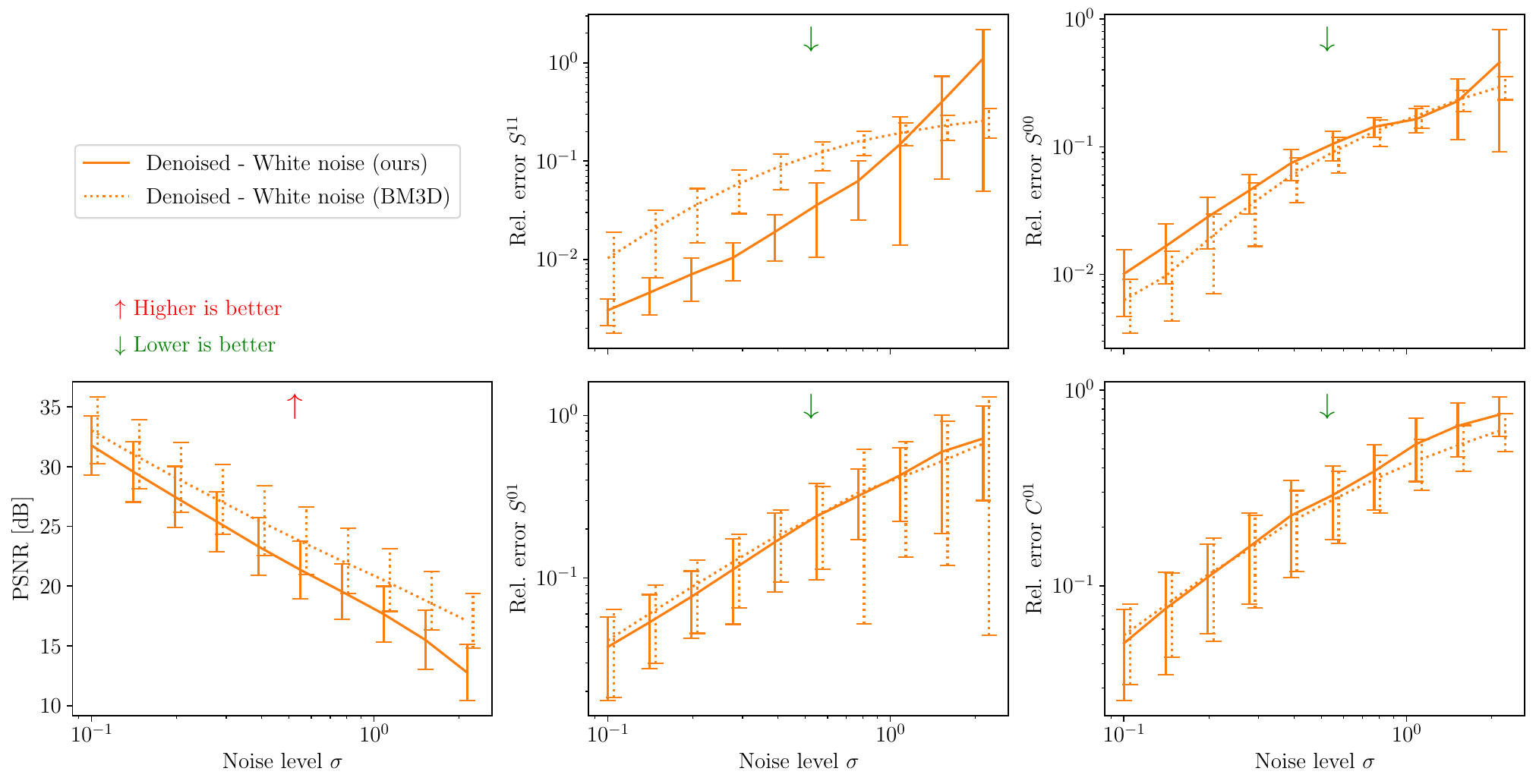}
    \includegraphics[width=0.85\textwidth]{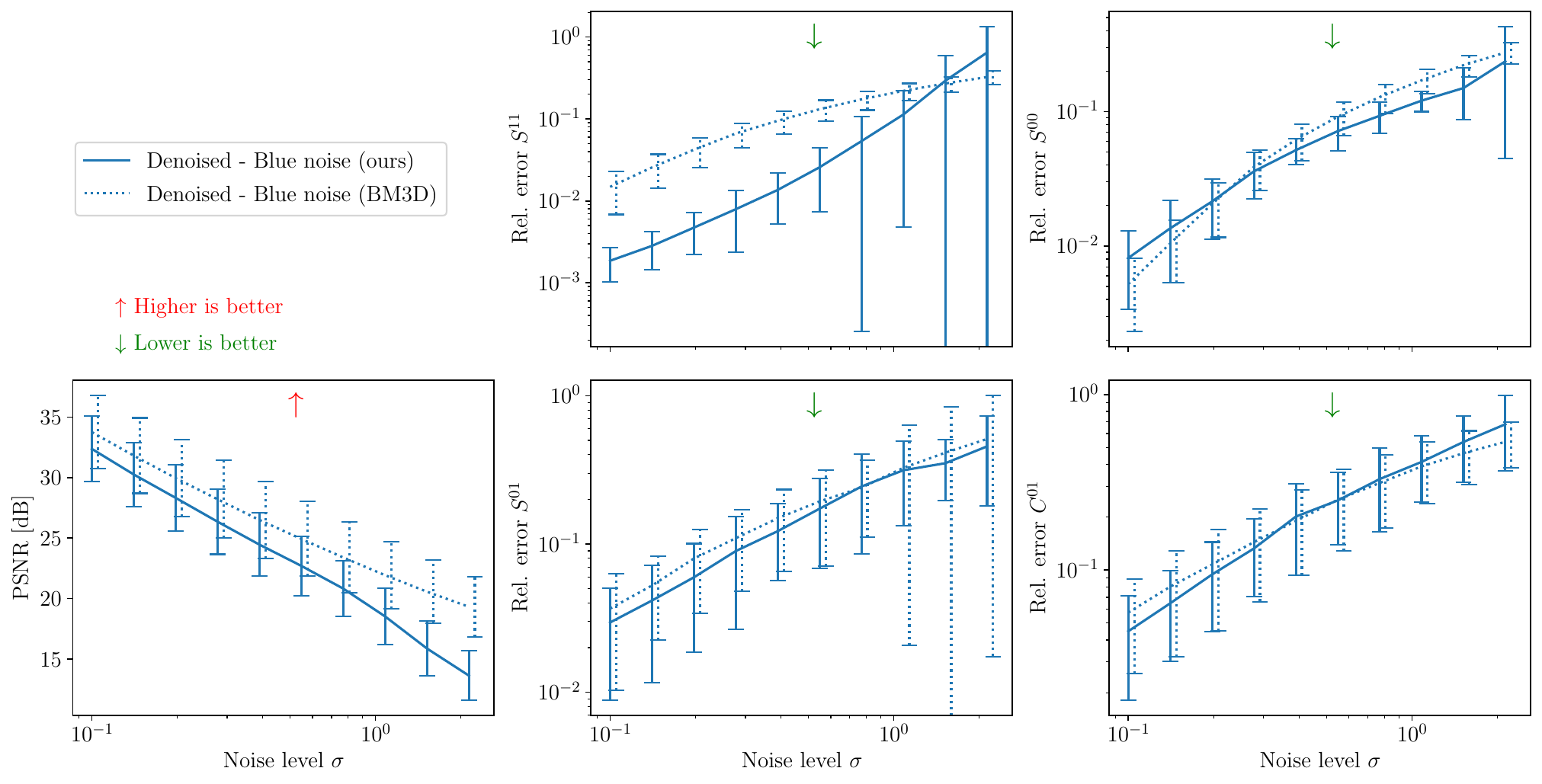}
    \caption{Same as Fig.~\ref{fig:wph_dust_stats} for randomly selected ImageNet images. We show the mean and standard deviation of the metrics computed across random batches of test ImageNet images.}
    \label{fig:wph_imagenet_random_sel_stats}
\end{figure}

\begin{figure}
    \includegraphics[width=\textwidth]{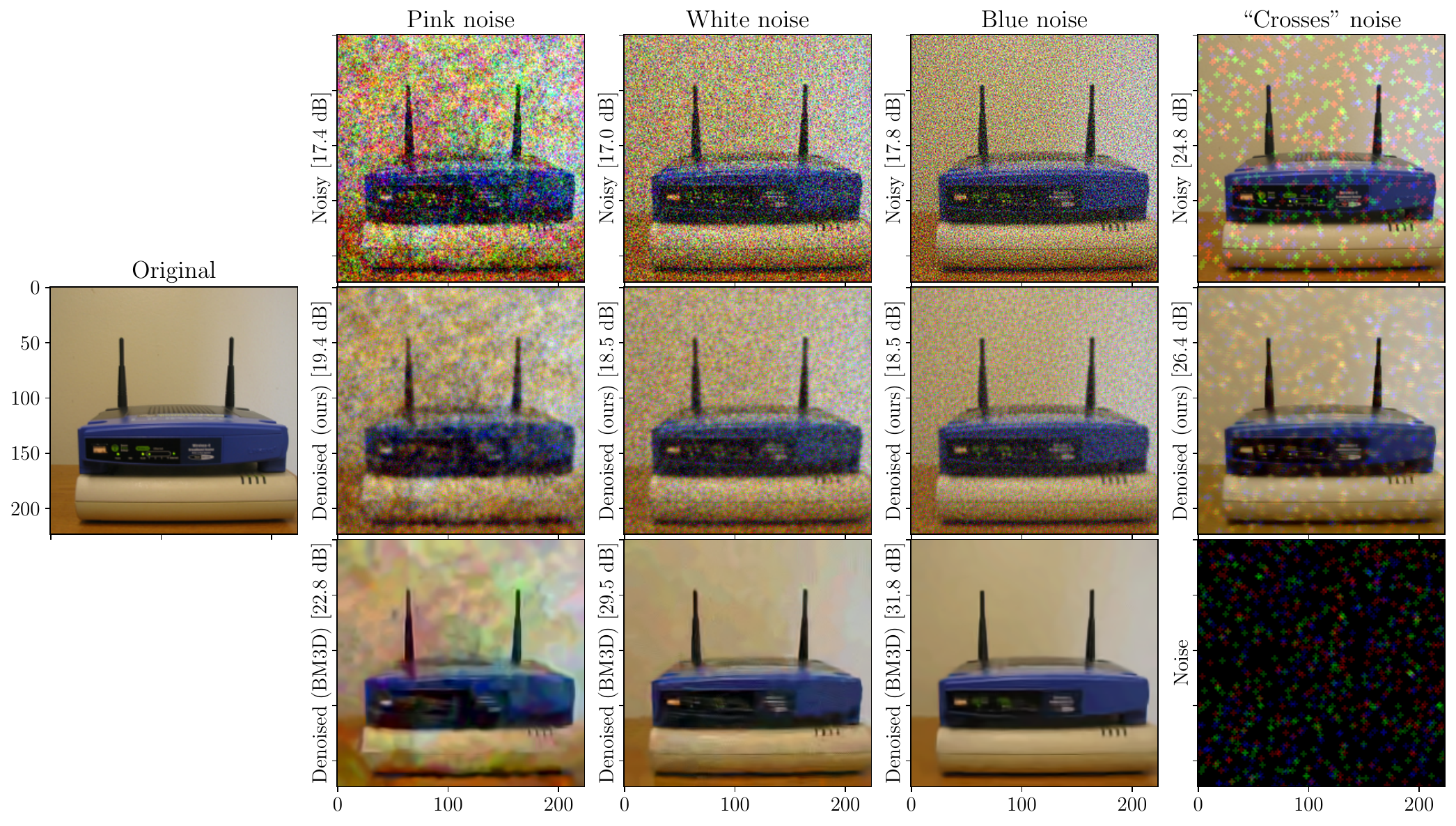}
    \caption{Same as Fig.~\ref{fig:wph_dust_images} for an example of ImageNet image and a ConvNet-based representation (see Sect.~\ref{sec:vgg_exp}).}
    \label{fig:vgg_imagenet_images}
\end{figure}

\end{document}